\documentclass{pepsart}

\usepackage[utf8]{inputenc} 
\usepackage{wasysym}
\usepackage{chemfig}
\usepackage{lineno}

\usepackage{makecell}
\usepackage{changes}
\usepackage{tabularx}

\usepackage{multirow}
\usepackage[utf8]{inputenc}
\usepackage[T1]{fontenc}
\usepackage{hyperref}
\usepackage{url}
\usepackage{booktabs}
\usepackage{amsfonts}
\usepackage{nicefrac}
\usepackage{microtype}
\usepackage{amsmath,amsthm,amssymb}
\hypersetup{
    colorlinks=true,
    citecolor=blue,
    filecolor=black,
    linkcolor=blue,
    urlcolor=black
}

\usepackage{array}

\usepackage{subcaption}

\usepackage{comment,color}

\usepackage[section]{placeins}

\usepackage{bm}
\usepackage{wrapfig}
\usepackage{graphicx}

\usepackage{listings}

\usepackage{color}
\definecolor{codegreen}{rgb}{0,0.6,0}
\definecolor{codegray}{rgb}{0.5,0.5,0.5}
\definecolor{codepurple}{rgb}{0.58,0,0.82}
\definecolor{backcolour}{rgb}{0.95,0.95,0.92}

\lstdefinestyle{mystyle}{
    backgroundcolor=\color{backcolour},
    commentstyle=\color{codegreen},
    keywordstyle=\color{magenta},
    numberstyle=\tiny\color{codegray},
    stringstyle=\color{codepurple},
    basicstyle=\ttfamily\footnotesize,
    breakatwhitespace=false,
    breaklines=true,
    captionpos=b,
    keepspaces=true,
    numbers=left,
    numbersep=5pt,
    showspaces=false,
    showstringspaces=false,
    showtabs=false,
    tabsize=2
}

\newtheorem{proposition}{Proposition}

\def\vx{{\bm{x}}}
\def\vy{{\bm{y}}}
\def\vtheta{{\bm{\theta}}}

\begin{document}

\begin{frontmatter}

\begin{fmbox}
\dochead{Research}


\title{Machine-Learning Emulation of Satellite Greenhouse Gas Retrievals: Stability over Time}


\author[
   addressref={aff1},
   corref={aff1},
   email={nugzar.gognadze@eurecom.fr}
]{\inits{N}\fnm{Nugzar} \snm{Gognadze}}
\author[
   addressref={aff1},
   email={motonobu.kanagawa@eurecom.fr}
]{\inits{M}\fnm{Motonobu} \snm{Kanagawa}}
\author[
   addressref={aff2},
   email={someya.yu@nies.go.jp}
]{\inits{Y}\fnm{Yu} \snm{Someya}}
\author[
   addressref={aff2},
   email={yashiro.hisashi@nies.go.jp}
]{\inits{H}\fnm{Hisashi} \snm{Yashiro}}


\address[id=aff1]{%
  \orgname{Data Science Department, EURECOM},
  \street{450 Route des Chappes},
  \city{Biot},
  \postcode{06410},
  \cny{France}
}

\address[id=aff2]{%
  \orgname{National Institute for Environmental Studies},
  \street{16-2 Onogawa},
  \city{Tsukuba, Ibaraki},
  \postcode{305-8506},
  \cny{Japan}
}


\begin{artnotes}

\end{artnotes}

\end{fmbox}


\begin{abstractbox}

\begin{abstract} 
Retrieval algorithms are used to estimate atmospheric concentrations of greenhouse gases (GHGs), such as carbon dioxide (CO$_2$) and methane (CH$_4$), by solving inverse problems from high-spectral-resolution satellite radiance measurements. However, these algorithms are computationally expensive, which makes real-time estimation at scale difficult.
Machine-learning models have therefore been proposed as fast emulators of retrieval algorithms. Most existing studies, however, evaluate them only on test data from the same period as the training data.

We study the stability over time of such emulators using data from the Greenhouse Gases Observing SATellite (GOSAT). We show that prediction accuracy generally deteriorates when the test period moves away from the training period. We also show that including time as an input feature substantially improves XCH$_4$ prediction for Lasso and neural-network models. Among the methods considered, a simple Lasso model performs as well as or better than more complex methods such as neural networks, and yields more stable predictions over time. We further validate the results using the Total Carbon Column Observing Network (TCCON), a ground-based observation network. On the TCCON-matched dataset, the time-augmented Lasso achieves errors against TCCON that are comparable to the disagreement between GOSAT and TCCON for both XCO$_2$ and XCH$_4$.
\end{abstract}


\begin{keyword}
\kwd{Greenhouse gases}, \kwd{Satellite retrieval}, \kwd{GOSAT}, 
\kwd{Machine-learning emulation}, \kwd{Out-of-time prediction}, 
\kwd{Temporal stability}, \kwd{Lasso regression}, \kwd{TCCON validation}
\end{keyword}


\end{abstractbox}
%

\end{frontmatter}





\section{Introduction}

Monitoring atmospheric greenhouse-gas (GHG) concentrations, in particular carbon dioxide (CO$_2$) and methane (CH$_4$), is essential for studying climate change and the global carbon cycle.
Satellite missions such as the Greenhouse Gases Observing SATellite (GOSAT) enable large-scale monitoring of these gases through global observations \cite{yokota2009global}. GOSAT carries the Thermal and Near-infrared Sensor for Carbon Observation--Fourier Transform Spectrometer (TANSO-FTS) \cite{kuze2009thermal}, which measures short-wavelength infrared (SWIR) radiation reflected from the Earth's surface at high spectral resolution. These spectra contain absorption features of CO$_2$ and CH$_4$, from which the column-averaged dry-air mole fractions XCO$_2$ and XCH$_4$ are retrieved.

Traditional retrieval methods for such measurements solve inverse problems using computationally intensive full-physics models that account for changes in light paths caused by atmospheric particles \cite{yoshida2011retrieval,yoshida2013improvement,butz2011toward,acos2012,acoss2018improved}. These models incorporate atmospheric states and instrument characteristics, and retrieve XCO$_2$ and XCH$_4$ through iterative optimization. However, their computational cost makes them difficult to use in large-scale applications.

To address these limitations, machine-learning approaches have been developed. In these approaches, the model learns a direct mapping from satellite-observed SWIR spectra and other input variables to the retrieved column-averaged concentrations XCO$_2$ and XCH$_4$. The trained model can then be used as a fast emulator of the retrieval algorithm, reducing the need for repeated radiative-transfer calculations at prediction time.

\subsection{Related Work} \label{sec:related-work}

Several studies have demonstrated the potential of machine learning for greenhouse-gas retrieval, but most evaluations have used training and test data drawn from the same overall period.

In David et al.~\cite{david2021xco}, a neural network is used to estimate XCO$_2$ and surface pressure from Orbiting Carbon Observatory-2 (OCO-2) radiance measurements over land in nadir mode, together with observation-geometry variables such as solar zenith angle and relative azimuth.
The model is trained on about 131000 observations from the even-numbered months between January~2015 and August~2018, and evaluated on observations from the odd-numbered months within the same period. Thus, the evaluation tests performance on held-out months within the same overall period, but does not directly assess performance on observations from later years.

Breon et al.~\cite{breon2022neural} study a neural-network approach for estimating XCO$_2$ from OCO-2 spectra. The analysis uses OCO-2 observations from February~2015 to December~2019. A 3\% random sample of the full dataset is used for training, and the sampled observations are excluded from the subsequent evaluation. Thus, the evaluation uses a random split within the same overall period, not a later-year test set.

Zhao et al.~\cite{zhao2022atmospheric} proposed a two-step machine-learning method for GOSAT-based XCO$_2$ retrieval. Their proof-of-concept study used 3300 synthetic samples generated from 33 base pressure, temperature, and CO$_2$ profiles from Australia on 26--28 July 2009, with a random 90/10 train--test split. Since both training and test samples were generated from the same limited set of base profiles, the study did not evaluate the method on independent observations from later dates.

Gong et al.~\cite{gong2024estimation} studied XCO$_2$ estimation from CrIS thermal-infrared satellite data using three ensemble-learning methods: Extreme Gradient Boosting (XGBoost), Extremely Randomized Trees (ERT), and Gradient Boosted Regression Trees (GBRT). Their dataset combined CrIS measurements with meteorological, surface, vegetation, elevation, and observation-geometry variables, with TCCON observations used as reference data. The dataset was randomly split into training (80\%), testing (10\%), and validation (10\%) sets. Thus, the evaluation was based on random splits from the same 2019 dataset, rather than on a later-period test set.

Cui et al.~\cite{cui2024estimating} proposed SpatialFusionNet, a spatial-feature fusion module for XCO$_2$ estimation using OCO-2 data over China. Their main OCO-2 evaluation used 19183 spatially matched samples from 2015 in sample-based cross-validation. They also compared model predictions with TCCON observations at the Hefei site from 2015 to 2017. This provides an external site-level check, but the reported TCCON metrics are aggregated over the full comparison period rather than reported separately by year. Thus, the primary OCO-2 evaluation was based on cross-validation of matched samples, not on a separate later-period test set.

Li et al.~\cite{Li2024UNMAMO} proposed a hybrid machine-learning approach for XCH$_4$
estimation using TROPOMI and GOSAT data. They used radiative-transfer
simulations to select methane-sensitive TROPOMI bands and trained random-forest
models using GOSAT XCH$_4$ retrievals as learning targets. Their evaluation
included cross-validation, comparison with TCCON observations in 2021, and
site-level temporal comparisons. However, they did not systematically evaluate
out-of-time prediction by training the emulator on an earlier period and testing
it on later satellite observations. They also noted that the spatial and temporal
representativeness of machine-learned retrieval models remains an important
issue.

Reuter et al.~\cite{reuter2025retrieving} explicitly addressed the difficulty of applying machine-learning retrievals to future data, noting that present-day spectra may not represent future conditions well because greenhouse-gas concentrations increase over time. To mitigate this, they proposed a hybrid approach in which spectra are modified so that the training data cover a wider range of CO$_2$ and CH$_4$ concentrations. Their evaluation, however, was based on an Observing System Simulation Experiment (OSSE), rather than real satellite observations. In this setting, the true atmospheric concentrations are known by construction, and the study assumes no systematic error in the training truth. This provides a controlled setting for studying temporal shift, but differs from the empirical setting considered here, where the emulator is trained on actual retrieval outputs and tested on later observations.

\subsection{Contributions}

The operationally relevant setting is one in which machine-learning models are trained on past data and applied to later observations. For example, training on 2020 data and testing on unseen 2021 data is more informative than using random train--test splits within 2020. As discussed by Efron et al.~\cite[Section~6]{efron2020prediction}, random train--test splits can give overly optimistic results, because they evaluate the model on test data drawn under conditions similar to those of the training data, whereas later observations may differ because of temporal drift. For machine-learning emulators of satellite greenhouse-gas retrievals, accuracy on later observations is therefore practically important but has been insufficiently studied.

This paper makes three contributions.

First, using GOSAT data, we evaluate four machine-learning models---Lasso, a neural network, k-nearest neighbours regression, and XGBoost---as emulators of satellite greenhouse-gas retrievals in the operationally relevant setting where models are trained on past data and evaluated on later observations. We show that prediction accuracy generally deteriorates as the test period moves away from the training period.

Second, we show that a simple modification---adding time itself as an input feature---substantially improves out-of-time prediction for XCH$_4$ for Lasso and neural-network models. This appears to work by compensating for a temporal mismatch in the learned input--output relation. By contrast, the effect is small for XCO$_2$ and mixed for k-nearest neighbours and XGBoost.

Third, we show that a simple time-augmented Lasso performs as well as or better than more complex methods and gives more stable predictions over time. External validation against the Total Carbon Column Observing Network (TCCON) \cite{wunch2011total,laughner2024ggg2020} shows that the Lasso emulator has errors of the same order as the GOSAT--TCCON discrepancy. This suggests that it can serve as a practical surrogate for the retrieval algorithm. The fitted coefficients are also physically interpretable: the sparsity of Lasso makes clear which inputs drive the predictions, and these are concentrated in known absorption bands and related auxiliary variables.

The rest of the paper is organized as follows.
Section~\ref{sec:emulating-retrieval} describes the machine-learning emulation
of retrieval algorithms and discusses the role of time in out-of-time prediction.
Section~\ref{sec:data-source} presents the data sources and preprocessing
steps. Section~\ref{sec:OOT-pred-performance} compares the out-of-time prediction
performance of the learning methods. Section~\ref{sec:detailed-analysis-Lasso-emulator}
gives a detailed analysis of the Lasso emulator, including temporal and spatial
diagnostics, coefficient analysis, and validation against TCCON.

\section{Emulating Retrieval with Machine Learning}
\label{sec:emulating-retrieval}

This section describes how a machine-learning model can be used to emulate a retrieval algorithm, and explains why accurate out-of-time prediction is difficult.

\subsection{Retrieval Algorithm}
\label{sec:retrieval-algo}

A retrieval algorithm maps an input vector $\vx$ to an output vector $\vy$:
\[
\vy = f^*(\vx).
\]
The input $\vx$ contains the observed radiance spectra, a priori estimates of the target atmospheric states (e.g., XCO$_2$ and XCH$_4$), and other environmental variables. The output $\vy$ contains the retrieved atmospheric states.

For example, in the NIES retrieval algorithm used in our experiments \cite{yoshida2011retrieval,yoshida2013improvement,someya2023update}, the retrieved state vector $\vy$\footnote{The state vector contains CO$_2$ and CH$_4$ in 15 vertical layers, their column averages XCO$_2$ and XCH$_4$, and other parameters such as aerosol and cloud parameters, surface pressure, temperature shift, and surface albedo.} is defined as the maximizer of the posterior density given the observed spectra \cite{rodgers2000inverse}. The posterior combines a prior distribution for the state vector with a likelihood for the observed spectra. The a priori distribution is specified from simulated a priori estimates, while the likelihood is defined through a radiative-transfer forward model with Gaussian observation error that accounts for multiple scattering by clouds and aerosols. Thus, the input vector $\vx$ consists of the observed spectra, the a priori quantities entering the prior distribution, and other auxiliary variables entering the likelihood; see Table~\ref{tab:feature_descriptions} in Section~\ref{sec:data-source} for details.

Computing $f^*(\vx)$ requires numerical optimization of the state vector, with repeated radiative-transfer calculations, and is therefore computationally expensive. Running the retrieval at scale in near real time may thus be infeasible.

However, it is feasible to apply the retrieval offline to many historical inputs $\vx_1,\dots,\vx_n$ and obtain the corresponding outputs $\vy_1,\dots,\vy_n$, yielding a dataset
\begin{equation}
\label{eq:training-data}
(\vx_1,\vy_1),\dots,(\vx_n,\vy_n),
\qquad
\text{where}
\qquad
\vy_i = f^*(\vx_i), \quad i=1,\dots,n.
\end{equation}
We use these input--output pairs to construct a computationally cheaper emulator of the retrieval algorithm.

\subsection{Emulation}

The aim of emulation is to approximate the computationally expensive retrieval function $f^*$ by a much cheaper function $f_{\vtheta}$:
\[
f^*(\vx) \approx f_{\vtheta}(\vx).
\]
Here $f_{\vtheta}$ denotes a surrogate model with parameter vector $\vtheta$. The parameters are estimated from the training data so that the surrogate predicts the retrieval outputs accurately for unseen inputs.

For simplicity, we now restrict attention to a scalar output $y \in \mathbb{R}$, such as XCO$_2$ or XCH$_4$. We write the input as a $d$-dimensional vector $\vx=(x_1,\dots,x_d)\in\mathbb{R}^d$. Then the training data in \eqref{eq:training-data} becomes
\begin{equation}
\label{eq:training-data-scalar}
(\vx_1,y_1),\dots,(\vx_n,y_n)\in \mathbb{R}^d \times \mathbb{R},
\qquad
\text{where}
\qquad
y_i=f^*(\vx_i), \quad i=1,\dots,n.
\end{equation}

A natural criterion for fitting the emulator $f_{\vtheta}$ is the mean squared error
\begin{equation}
\label{eq:MSE}
\frac{1}{n}\sum_{i=1}^n \bigl(y_i-f_{\vtheta}(\vx_i)\bigr)^2.
\end{equation}
The parameter vector $\vtheta$ is estimated by minimizing this criterion, possibly with a regularization term to control model complexity and reduce overfitting.

\label{sec:lasso-expl}

One simple example is the Lasso \cite{tibshirani1996regression,hastie2015statistical}, that is, $\ell_1$-regularized linear regression. It models the output as
\begin{equation}
\label{eq:Lasso-without-time}
f_{\vtheta}(\vx)=\theta_0+\sum_{j=1}^d \theta_j x_j,
\end{equation}
where $\vtheta=(\theta_0,\theta_1,\dots,\theta_d)$ is the parameter vector. The parameters are estimated by minimizing
\begin{equation*}
\min_{\vtheta}\;
\frac{1}{n}\sum_{i=1}^n \bigl(y_i-f_{\vtheta}(\vx_i)\bigr)^2
+\lambda \sum_{j=1}^d |\theta_j|,
\end{equation*}
where $\lambda\ge 0$ is a regularization parameter.\footnote{
The intercept $\theta_0$ is not penalized; the $\ell_1$ penalty is applied only to the feature coefficients.
} The $\ell_1$ penalty encourages sparsity, so that only a relatively small number of features are assigned nonzero coefficients.

We also consider neural networks, k-nearest neighbours regression, and XGBoost; see Section~\ref{sec:models-evaluation-protocol}.

\subsection{Why Out-of-Time Prediction Is Difficult}

The emulator should predict retrieval outputs accurately not only within the
training period, but also in later periods. Good in-period accuracy is not
sufficient. Even if the retrieval algorithm itself is fixed, the distribution
of the inputs may change over time. A model trained under one input distribution
may then give a poor approximation under another, especially when the model
class is restricted.

This difficulty is illustrated by
Figures~\ref{fig:Lasso_timeline_xco2_land_ocean}
and~\ref{fig:Lasso_timeline_xch4_land_ocean} in
Section~\ref{sec:Lasso-temporal-behaviour}. These figures show representative
Lasso fits for XCO$_2$ and XCH$_4$ over land and ocean. The models were trained
on 2020 data and evaluated on held-out 2020 data and on out-of-time data from
2021 to 2023. For XCO$_2$, the deterioration over time is mild. For XCH$_4$,
by contrast, prediction accuracy deteriorates rapidly after 2020. Similar
patterns are also seen for the other models; see
Section~\ref{sec:OOT-pred-performance}.

A natural explanation is covariate shift
\cite{shimodaira2000improving,sugiyama2012machine}. In our setting, the input
vector includes observed radiance spectra and auxiliary retrieval variables.
As atmospheric CO$_2$ and CH$_4$ concentrations increase, the spectra observed
in later years need not have the same distribution as those observed in the
training period. A related concern was emphasized by Reuter et al.~\cite{reuter2025retrieving}, who noted that present-day spectra may not
represent future conditions well when greenhouse-gas concentrations increase
over time.

The XCH$_4$ case shows the problem more concretely; see
Appendix~\ref{sec:learning-theory} for a population-level formulation. During
the 2020 training period, the difference between retrieved XCH$_4$ and the
a priori XCH$_4$ is approximately constant. This suggests the local approximation
\begin{equation}
\label{eq:XCH4-hypothesis-overfit}
f^*(\vx) \approx x_{\rm XCH4}^{\rm ap} + r(\vx),
\qquad
r(\vx) \approx \text{constant},
\end{equation}
where $x_{\rm XCH4}^{\rm ap}$ denotes the a priori XCH$_4$, one component of
the input vector $\vx$, and $r(\vx)$ denotes the residual difference between
retrieved and a priori XCH$_4$. Thus, on the 2020 training distribution, the retrieval output can be approximated
well by the a priori XCH$_4$ plus an approximately constant correction.

This relation does not remain valid in later years. From 2021 onward, the
difference between retrieved and a priori XCH$_4$ increases over time and is no
longer approximately constant. Thus, retrieved XCH$_4$ is not well represented
by a priori XCH$_4$ plus a fixed correction. It also depends on the spectra and
other inputs in a way that is not identified from the 2020 training data alone.
The resulting failure is therefore not ordinary overfitting within 2020, but a
failure to extrapolate the learned input--output relation to later input
distributions.

The same point can be stated in terms of the training criterion. The empirical
loss in \eqref{eq:MSE} estimates prediction error under the training-period
input distribution:
\begin{align*}
\frac{1}{n}\sum_{i=1}^n
\bigl(f^*(\vx_i)-f_{\vtheta}(\vx_i)\bigr)^2
&\approx
\int
\bigl(f^*(\vx)-f_{\vtheta}(\vx)\bigr)^2
p_{\rm tr}(\vx)\,d\vx .
\end{align*}
The relevant out-of-time error is instead
\begin{align*}
\int
\bigl(f^*(\vx)-f_{\vtheta}(\vx)\bigr)^2
p_{\rm te}(\vx)\,d\vx ,
\end{align*}
where $p_{\rm tr}$ and $p_{\rm te}$ denote the input distributions in the
training and later test periods. When $p_{\rm tr}\neq p_{\rm te}$, small
training-period error does not imply small out-of-time error. This provides one
explanation for the rapid deterioration observed for XCH$_4$.

A standard response to covariate shift is importance weighting, which reweights
the training loss by an estimate of the density ratio
$p_{\rm te}(\vx)/p_{\rm tr}(\vx)$
\cite{sugiyama2012density}. We do not pursue this approach here. Reliable
density-ratio estimation is difficult in this problem because the input vector
is high-dimensional, and the future test distribution is not available at
training time.
\subsection{Time as an Input Feature}

The discussion above suggests that part of the out-of-time deterioration is caused by time-dependent changes that are not fully accounted for by the original emulator inputs.

Each input vector $\vx$ is associated with an observation time $t$, so we may write the retrieval output as $f^*(\vx(t))$. Time is not itself an input to the retrieval algorithm; it enters only indirectly through the input vectors.

We augment the emulator input by adding time $t$ as an additional feature.
The emulator is then trained to minimize
\begin{equation}
\frac{1}{n} \sum_{i=1}^n \left( f^*(\vx(t_i)) - f_{\vtheta}(\vx(t_i), t_i) \right)^2,
\end{equation}
where $t_i$ is the observation time of the $i$th training sample.

For example, the time-augmented Lasso model takes the form\footnote{Time is normalized so that the beginning and the end of the training period correspond to 0 and 1, respectively; see Section~\ref{sec:data-source}.}
\begin{equation}
\label{eq:Lasso-with-time}
f_{\vtheta}(\vx,t)=\theta_0+\sum_{j=1}^d \theta_j x_j+\theta_{d+1} t,
\end{equation}
where $\theta_{d+1}\in\mathbb{R}$ is the coefficient of the time feature. More elaborate time features, such as polynomial or trigonometric functions of time, could also be used, but we restrict attention to a linear time term for simplicity. The corresponding Lasso objective is
\begin{equation*}
\min_{\vtheta}\;
\frac{1}{n}\sum_{i=1}^n \bigl(y_i-f_{\vtheta}(\vx_i,t_i)\bigr)^2
+\lambda \sum_{j=1}^{d+1} |\theta_j|.
\end{equation*}
The other machine-learning models considered in this paper can also be augmented in the same way.

Figures~\ref{fig:Lasso_timeline_xco2_land_ocean} and \ref{fig:Lasso_timeline_xch4_land_ocean} in
Section~\ref{sec:Lasso-temporal-behaviour} compare representative Lasso fits with and without the time feature for XCO$_2$ and XCH$_4$, respectively. For XCH$_4$, the time-augmented Lasso is clearly more accurate than the original Lasso. Over the 2021--2023 test period, its predictions remain much closer to the retrieved XCH$_4$, whereas the original Lasso increasingly underestimates it. The figure also shows the fitted linear time component of the time-augmented Lasso. This component closely matches the upward trend in XCH$_4$, suggesting that the model has learned the time trend from the 2020 training data. The time feature thus compensates for the missing temporal trend in the a priori XCH$_4$.

For XCO$_2$, the effect of the time feature differs between land and ocean. Over land, the time-augmented Lasso is comparable to the original Lasso on the 2021 test data, but gradually overestimates the retrieved XCO$_2$ in 2022--2023 and becomes slightly less accurate. A plausible explanation is that the a priori XCO$_2$ already contains much of the relevant temporal trend, so that the additional linear time term mildly over-corrects in later years. Over ocean, by contrast, the fitted time term appears to play little role, and the time-augmented and original Lasso models behave almost identically. Thus, the usefulness of time augmentation depends on the residual structure left by the original inputs. Appendix~\ref{sec:learning-theory} gives a population-level formulation of this point.

\section{Data and Preprocessing}
\label{sec:data-source}

This section describes the data and preprocessing used in the experiments. We
specify the GOSAT inputs and retrieval outputs, the TCCON collocation procedure
for external validation, and the construction of the training and test datasets.
All standardization parameters are computed from the 2020 training data only,
so that no information from the test periods enters model fitting.

\paragraph{Input variables.}
The retrieval algorithm to be emulated is the NIES retrieval algorithm mentioned in Section~\ref{sec:retrieval-algo} \cite{yoshida2011retrieval,yoshida2013improvement,someya2023update}. Its input vector~$\vx$ consists of observed radiance spectra, gas-specific a priori variables, and other auxiliary variables, as summarized in \autoref{tab:feature_descriptions}.
\begin{itemize}
    \item The observed spectra are from the GOSAT TANSO-FTS SWIR Level 1B product V230.231. They consist of radiance measurements at 4261 discrete spectrometer channels. Each channel corresponds to a central wavenumber (cm$^{-1}$) and a narrow spectral interval. For example, \texttt{ch3760} corresponds to 4799.61~cm$^{-1}$ in the CO$_2$ absorption band.

    \item The a priori variables are derived from a global atmospheric transport model \cite{maksyutov2008nies,saeki2013global}. For XCO$_2$, we use the CO$_2$ a priori profile \texttt{co2\_1}, \ldots, \texttt{co2\_15} and the column-mean a priori value \texttt{xco2\_ap}. For XCH$_4$, we use the CH$_4$ a priori profile \texttt{ch4\_1}, \ldots, \texttt{ch4\_15} and the column-mean a priori value \texttt{xch4\_ap}.

    \item The remaining auxiliary variables include Doppler variables, location and height, observation geometry, surface pressure, and temperature profile variables. Each angular variable $\varphi$ is represented by its sine and cosine, $\sin(\pi \varphi / 180)$ and $\cos(\pi \varphi / 180)$. For example, \texttt{SZs} denotes the sine of the solar zenith angle, and \texttt{SatAs} denotes the sine of the satellite azimuth angle. The temperature variables are indexed by vertical level; for example, \texttt{T15} denotes the temperature at level 15.
\end{itemize}
Thus, for each target variable, each observation is represented by 4323 input variables without time and 4324 input variables with time.

\paragraph{Output variables.}
The output variables are the retrieved XCO$_2$ and XCH$_4$ from the GOSAT
TANSO-FTS SWIR Level 2 (L2) V03.00 product \cite{someya2023update}. We train
separate models for each target variable and surface type. The XCO$_2$ and
XCH$_4$ models use their corresponding a priori variables, and within each target
variable separate models are fitted for land and ocean observations.

\begin{table}[t]
\centering
\caption{Input variables used by the retrieval emulator.
Time $t$ is used only in the time-augmented learning models and is not an input
to the retrieval algorithm itself.}
\label{tab:feature_descriptions}

\small
\renewcommand{\arraystretch}{1.15}
\setlength{\tabcolsep}{3pt}

\begin{tabularx}{\textwidth}{@{}
>{\raggedright\arraybackslash}p{2.5cm}
>{\raggedright\arraybackslash}p{3.5cm}
>{\centering\arraybackslash}p{0.8cm}
>{\raggedright\arraybackslash}p{1.8cm}
>{\raggedright\arraybackslash}X
@{}}
\toprule
\textbf{Category} & \textbf{Variable names} & \textbf{Dim.} & \textbf{Unit} & \textbf{Description} \\
\midrule
Spectra
& \texttt{ch0}, \ldots, \texttt{ch4260}
& 4261
& \makecell[l]{W\,cm$^{-2}$\,sr$^{-1}$\\(cm$^{-1}$)$^{-1}$}
& Observed radiance spectra. \\

A priori CH$_4$
& \texttt{ch4\_1}, \ldots, \texttt{ch4\_15}
& 15
& ppm
& A priori CH$_4$ profile at levels 1--15. \\

A priori CO$_2$
& \texttt{co2\_1}, \ldots, \texttt{co2\_15}
& 15
& ppm
& A priori CO$_2$ profile at levels 1--15. \\

A priori column means
& \texttt{xch4\_ap}, \texttt{xco2\_ap}
& 2
& ppm
& Column-mean a priori values for XCH$_4$ and XCO$_2$. \\

Doppler variables
& \texttt{dop\_v\_earth}, \texttt{dop\_v\_sat}, \texttt{dop\_v\_sun}
& 3
& m\,s$^{-1}$
& Doppler velocities due to Earth rotation, satellite motion, and the Sun. \\

Location and height
& \texttt{h\_avg}, \texttt{lat}, \texttt{lon}
& 3
& --
& Averaged height in the IFOV, latitude, and longitude. \\

Observation geometry
& \texttt{SAs}, \texttt{SAc}, \texttt{SZs}, \texttt{SZc}, \texttt{SatAs}, \texttt{SatAc}, \texttt{SatZs}, \texttt{SatZc}
& 8
& --
& Sine and cosine representations of solar and satellite azimuth and zenith angles. \\

Surface pressure
& \texttt{Sp}
& 1
& hPa
& A priori surface pressure. \\

Temperature profile
& \texttt{T1}, \ldots, \texttt{T31}
& 31
& K
& A priori temperature profile at levels 1--31. \\

Time
& \texttt{t}
& 1
& --
& Normalized observation time for the time-augmented learning models. \\
\bottomrule
\end{tabularx}
\end{table}

\paragraph{External validation dataset.}
We use the Total Carbon Column Observing Network (TCCON) \cite{wunch2011total,laughner2024ggg2020} as an external validation dataset. TCCON provides high-quality ground-based observations of greenhouse-gas concentrations. The TCCON data from the sites listed in \autoref{tab:tccon_sites} in Appendix~\ref{app:additional-tables} were obtained from the TCCON Data Archive hosted at \url{https://tccondata.org}. Each site is cited through its corresponding public TCCON dataset reference, and the corresponding dataset DOIs are given in the reference list.

For each TCCON observation, we identify matching GOSAT observations using the following criteria: a latitude difference of at most 2.0 degrees, a longitude difference of at most 2.0 degrees, a height difference of at most 500~m, and a time difference of at most 30 minutes. This yields 5225 matched GOSAT--TCCON pairs from 2021 to 2023.

\paragraph{Time.}
For the time-augmented models, the observation time $t$ is normalized so that the start and end of 2020 correspond to $0$ and $1$, respectively. Values greater than $1$ therefore indicate observations from 2021 onward.

\paragraph{Missing values.}
Among the final input variables, missing values occurred only in one radiance
channel, \texttt{ch4260}: 245 values out of 138310 observations. These values
were imputed by averaging chronologically adjacent non-missing observations
within the same channel. No neighbouring spectral channels were used.

\paragraph{Standardization.}
For each input variable, we computed the mean and standard deviation from the
corresponding 2020 training subset only. Using these quantities, we standardized
the same variable in the training subset, the corresponding 2020 in-period test
set, the 2021--2023 out-of-time test sets, and the TCCON-matched evaluation
dataset. No information from the evaluation data was used in this step.

After preprocessing, the 2020 dataset contains 87013 land observations and 38673 ocean observations. These observations are used for training and in-period testing, while the 2021--2023 observations are reserved for out-of-time evaluation.

\section{Out-of-Time Prediction Performance}
\label{sec:OOT-pred-performance}

This section compares the out-of-time prediction performance of four learning
methods: Lasso, neural networks, $k$-nearest neighbours regression, and XGBoost.
For each method, we consider two versions, one without time and one with the
normalized time feature. The aim is to assess how well these methods remain
accurate on later observations, and whether explicit time information improves
out-of-time prediction differently for XCO$_2$ and XCH$_4$.

\subsection{Learning Models and Evaluation Protocol}
\label{sec:models-evaluation-protocol}

We compare four representative machine-learning models: Lasso~\cite{tibshirani1996regression},
neural networks (NN)~\cite{goodfellow2016deep}, $k$-nearest neighbours
($k$-NN) regression~\cite{stone1977consistent}, and
XGBoost~\cite{chen2016xgboost}.
Each method is fitted or tuned using mean-squared prediction error, where applicable.

Lasso is a linear regression model with sparsity-inducing $\ell_1$
regularization; see Section~\ref{sec:lasso-expl} and Appendix~\ref{app:Lasso}.
The neural network is a fully connected feed-forward network with ReLU
activations~\cite{nair2010rectified}, optimized by Adam~\cite{kingma2017}
with early stopping; see Appendix~\ref{app:NN}. The $k$-NN model predicts by
averaging the outputs of the $k$ nearest training inputs under a selected
distance metric; see Appendix~\ref{app:knn}. XGBoost constructs an additive
ensemble of decision trees by gradient boosting; see Appendix~\ref{app:xgb}.

For each learning method, target gas, and surface type, we fit two versions:
one without the time feature and one with the normalized time feature. This
allows a direct comparison of whether adding time improves out-of-time
prediction.

For each run, we randomly select 80\% of the 2020 data for training.
Hyperparameters are selected by cross-validation within this training set,
using a random 3:1 split between sub-training and validation data. The remaining
20\% of the 2020 data is used as an in-period test set. The 2021, 2022, and
2023 datasets are used only as out-of-time test sets.

For each target variable, test set, and trained model, we compute the normalized
root mean squared error
\begin{equation}
\label{eq:NRMSE}
\mathrm{NRMSE}
=
\left\{
\frac{1}{N}
\sum_{i=1}^N
\left(
\frac{y_i - \hat y_i}{y_i}
\right)^2
\right\}^{1/2},
\end{equation}
where \(N\) is the number of observations in the test set, \(y_i\) is the GOSAT
retrieval for the \(i\)th test observation, and \(\hat y_i\) is the corresponding
model prediction. We repeat the experiment 10 times and report the mean and
standard deviation of \(\mathrm{NRMSE}\) across runs.

\subsection{Prediction Results}
\label{sec:prediction-results}

\begin{figure}[t]
    \centering
    \includegraphics[width=0.95\textwidth]{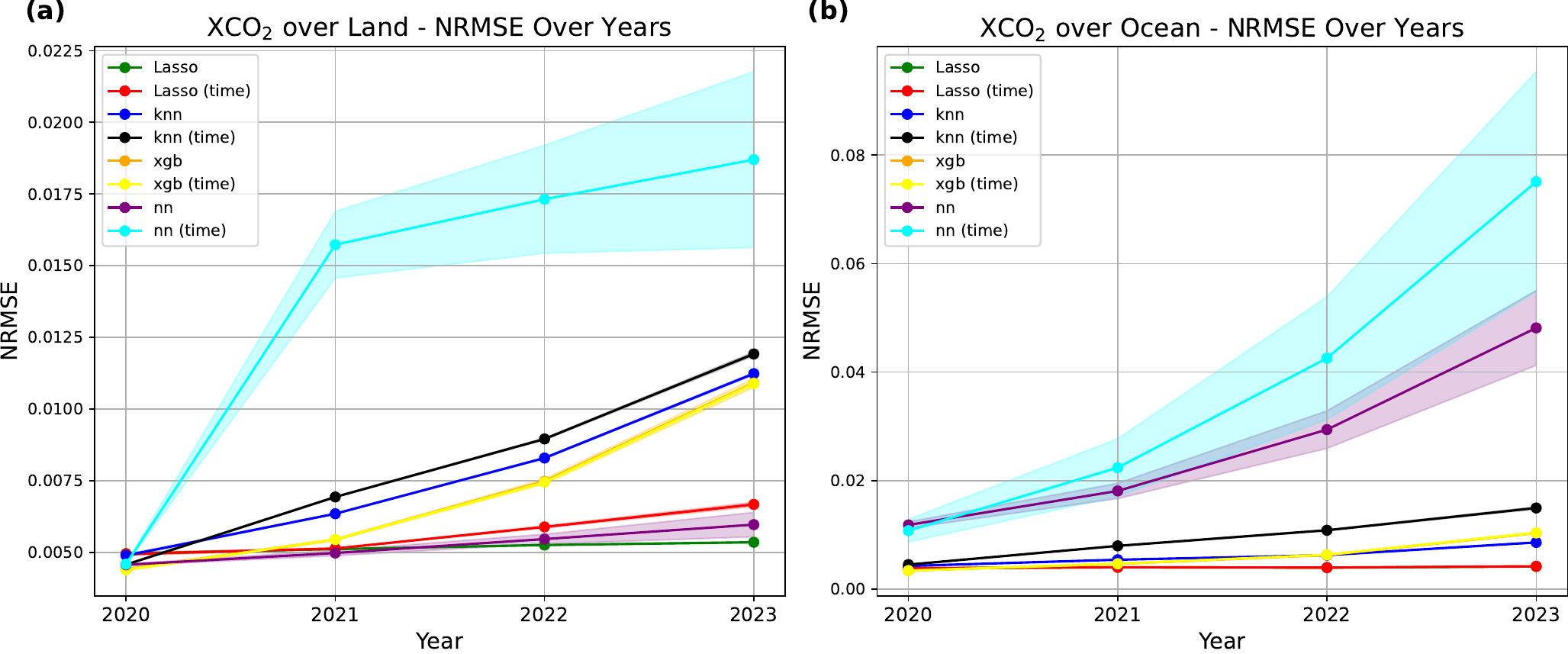}
\caption{Out-of-time NRMSE for XCO$_2$.
NRMSE is shown for the 2020 in-period test set and the 2021--2023 out-of-time
test sets for $k$-NN, XGBoost, Lasso, and NN over land (left) and ocean
(right). Each model is trained 10 times on different random 80/20\% splits of
the corresponding 2020 data. Solid lines show the mean across runs, and shaded
bands indicate one standard deviation across runs.}
    \label{fig:OOT_xco2_land_ocean}
\end{figure}

\begin{figure}[t]
    \centering
    \includegraphics[width=0.95\textwidth]{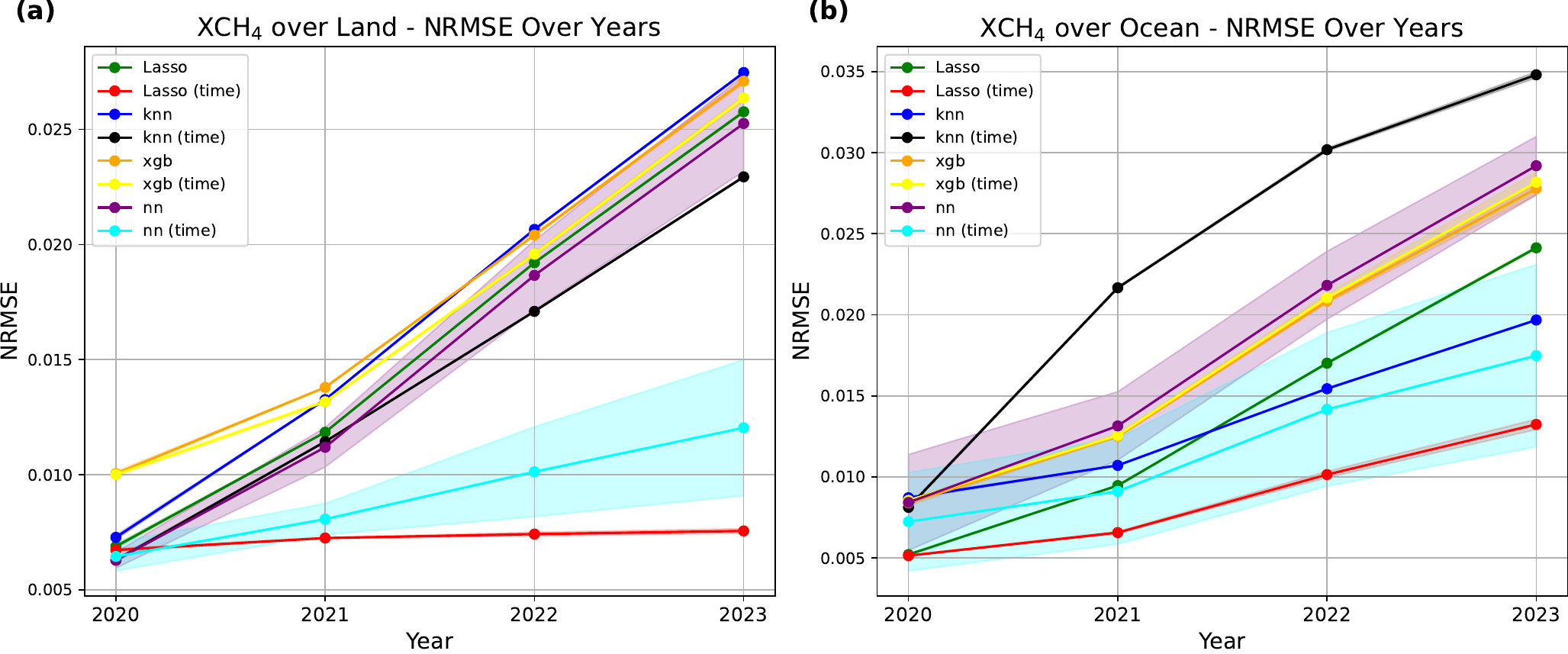}
    \caption{Out-of-time NRMSE for XCH$_4$.
    NRMSE is shown for the 2020 in-period test set and the 2021--2023 out-of-time
    test sets for $k$-NN, XGBoost, Lasso, and NN over land (left) and ocean
    (right). Each model is trained 10 times on different random 80/20\% splits of
    the corresponding 2020 data. Solid lines show the mean across runs, and shaded
    bands indicate one standard deviation across runs.}
    \label{fig:OOT_xch4_land_ocean}
\end{figure}

Figures~\ref{fig:OOT_xco2_land_ocean} and~\ref{fig:OOT_xch4_land_ocean}
show the mean NRMSE and the corresponding $\pm 1$ standard deviation bands
across 10 independent runs for XCO$_2$ and XCH$_4$, respectively. The comparison
covers all four learning methods, with and without the normalized time feature,
on the 2020 in-period test set and the 2021--2023 out-of-time test sets.
Full numerical results are reported in Appendix~\ref{app:additional-tables},
Tables~\ref{tab:OOT_land_xco2}--\ref{tab:OOT_ocean_xch4}. Except for the neural
network, the standard deviations are so small that the shaded bands are barely
visible.

The main result is that Lasso gives the most stable out-of-time performance
overall. Time augmentation is useful mainly for XCH$_4$, but its benefit is
model-dependent. For XCH$_4$, adding time substantially improves later-year
prediction for Lasso and the neural network, with the time-augmented Lasso
giving the lowest and most stable errors overall. For XCO$_2$, the effect is
small: over ocean, the Lasso results are essentially unchanged, while over land
the time-augmented version deteriorates mildly in later years.

\subsubsection{Interpretation of the XCO$_2$ Results}

For XCO$_2$, all methods are accurate on the 2020 in-period test set, but their
out-of-time behaviour differs. The Lasso-based models deteriorate only mildly
from 2021 to 2023, whereas XGBoost and $k$-NN deteriorate more markedly. The
neural network without time remains competitive with Lasso over land, but is
less competitive over ocean and shows larger run-to-run variability, especially
in later years.

Adding time does not materially improve XCO$_2$ prediction. Over ocean, the original and time-augmented Lasso curves in Figure~\ref{fig:OOT_xco2_land_ocean} are almost indistinguishable. Over land, the time-augmented Lasso is slightly worse in later years.
For the neural network, adding time worsens performance, especially over land. These results suggest that, for XCO$_2$, the spectra and
a priori inputs already capture much of the relevant long-term variation. An
explicit linear time feature therefore adds little and may over-correct in some
settings.

\subsubsection{Interpretation of the XCH$_4$ Results}

For XCH$_4$, all methods have comparable NRMSEs on the 2020 in-period test set,
although these errors are larger than for XCO$_2$. This suggests that XCH$_4$
is more difficult to emulate. On the 2021--2023 out-of-time test sets, however,
the errors of all models without the time feature increase rapidly. This includes
Lasso. Thus, for XCH$_4$, the relation learned from the 2020 data without an
explicit time feature does not remain stable in later years. In particular, the
approximately constant difference between retrieved and a priori XCH$_4$ during
the training period does not persist over time, as shown in
Figure~\ref{fig:Lasso_timeline_xch4_land_ocean}.

Adding time substantially improves XCH$_4$ prediction for Lasso and the neural
network. Their errors are much lower than those of the corresponding models
without time, and they increase much more mildly from 2021 to 2023. By contrast,
the effect of adding time is mixed for $k$-NN and XGBoost. This supports the
interpretation that the time feature compensates for the missing temporal trend
in the a priori XCH$_4$, as illustrated by the Lasso fits in
Figure~\ref{fig:Lasso_timeline_xch4_land_ocean}. Appendix~\ref{sec:learning-theory}
gives a formal interpretation in terms of a time-dependent residual component.

Among the time-augmented models, Lasso is the most stable. Over land, its errors
change very little from 2021 to 2023. The time-augmented neural network also
improves substantially, but its standard deviations are larger, especially in
later years. Thus, for XCH$_4$, the time-augmented Lasso combines low error with
greater stability across runs.

\subsubsection{Summary of the Results}

The main benefit of adding time is for XCH$_4$, not for XCO$_2$. For XCH$_4$,
both Lasso and the neural network improve when time is included, but Lasso gives
lower errors and smaller run-to-run variability overall. Since Lasso is also
simpler and more interpretable, these results motivate the more detailed analysis
of the Lasso emulator in the following section.

\section{Detailed Analysis of the Lasso Emulator}
\label{sec:detailed-analysis-Lasso-emulator}

We now focus on the Lasso emulator, which gave the most stable out-of-time
prediction performance in Section~\ref{sec:OOT-pred-performance}. Where relevant,
we compare the versions with and without the time feature; for the coefficient
analysis, we use the time-augmented Lasso.

Section~\ref{sec:Lasso-temporal-behaviour} examines the temporal behaviour of
the Lasso predictions and the corresponding global residual maps.
Section~\ref{sec:feature-importance} studies the fitted Lasso coefficients and
identifies the spectral and non-spectral inputs that drive the predictions.
Section~\ref{sec:TCCON-validation} validates the Lasso emulator against TCCON
and compares its errors with the GOSAT--TCCON discrepancy.

\subsection{Temporal and Spatial Behaviour of the Lasso}

\label{sec:Lasso-temporal-behaviour}

\begin{figure}[htp]
    \centering
    \includegraphics[width=0.9\linewidth]{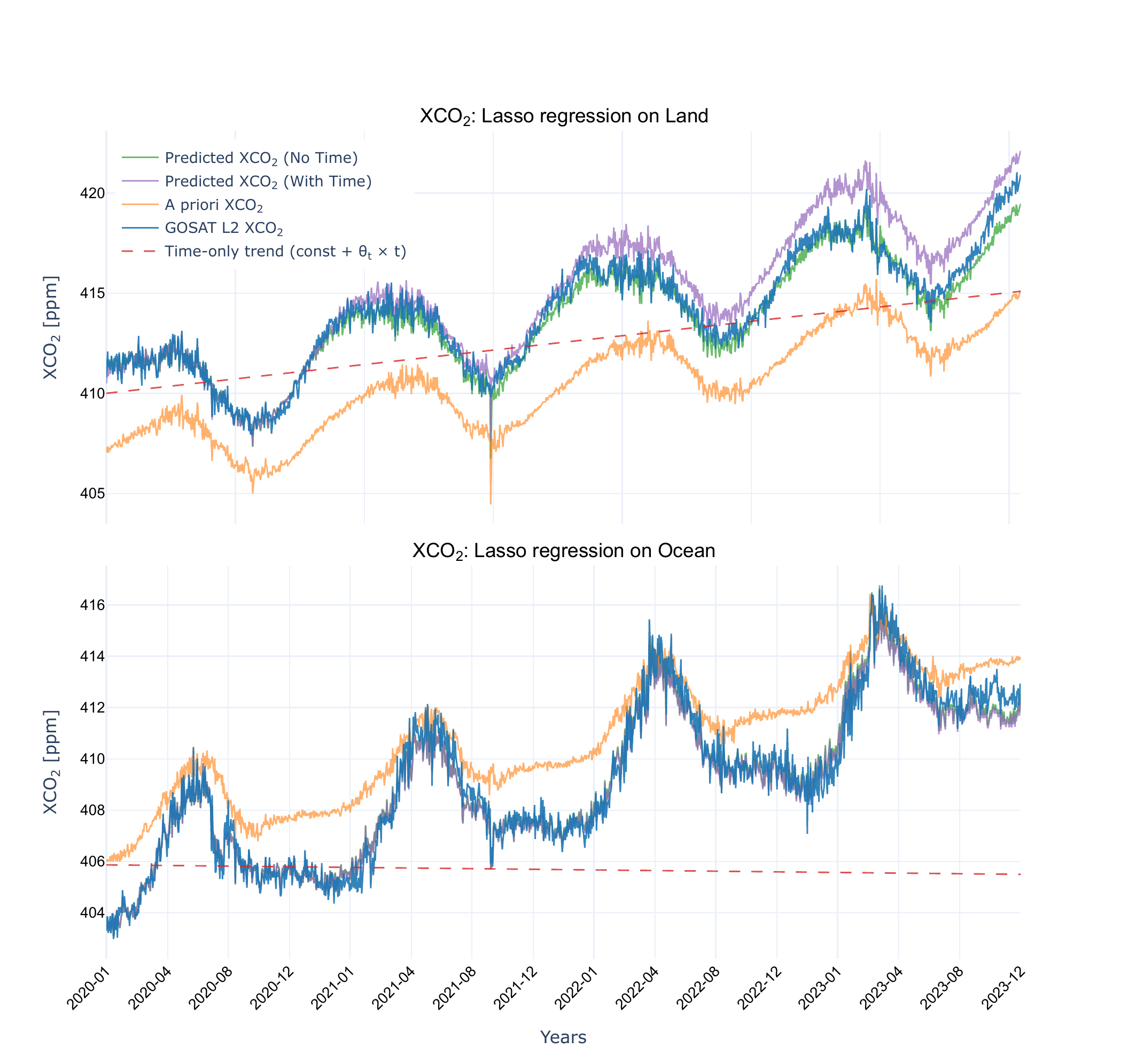}
\caption{Lasso predictions for XCO$_2$.
Predictions over land and ocean from 2020 to 2023 are shown. Predictions
without time (green) and with time (purple) are compared with the GOSAT
Level~2 retrievals (blue) and the a priori values (orange). The dashed red
line shows the fitted linear time component, $\mathrm{const}+\theta_t t$, from
the time-augmented Lasso. The models were trained on 2020 data using a random
80/20 train--test split and then evaluated on 2021--2023 data.}
    \label{fig:Lasso_timeline_xco2_land_ocean}
\end{figure}

\begin{figure}[htp]
    \centering
    \includegraphics[width=0.95\linewidth]{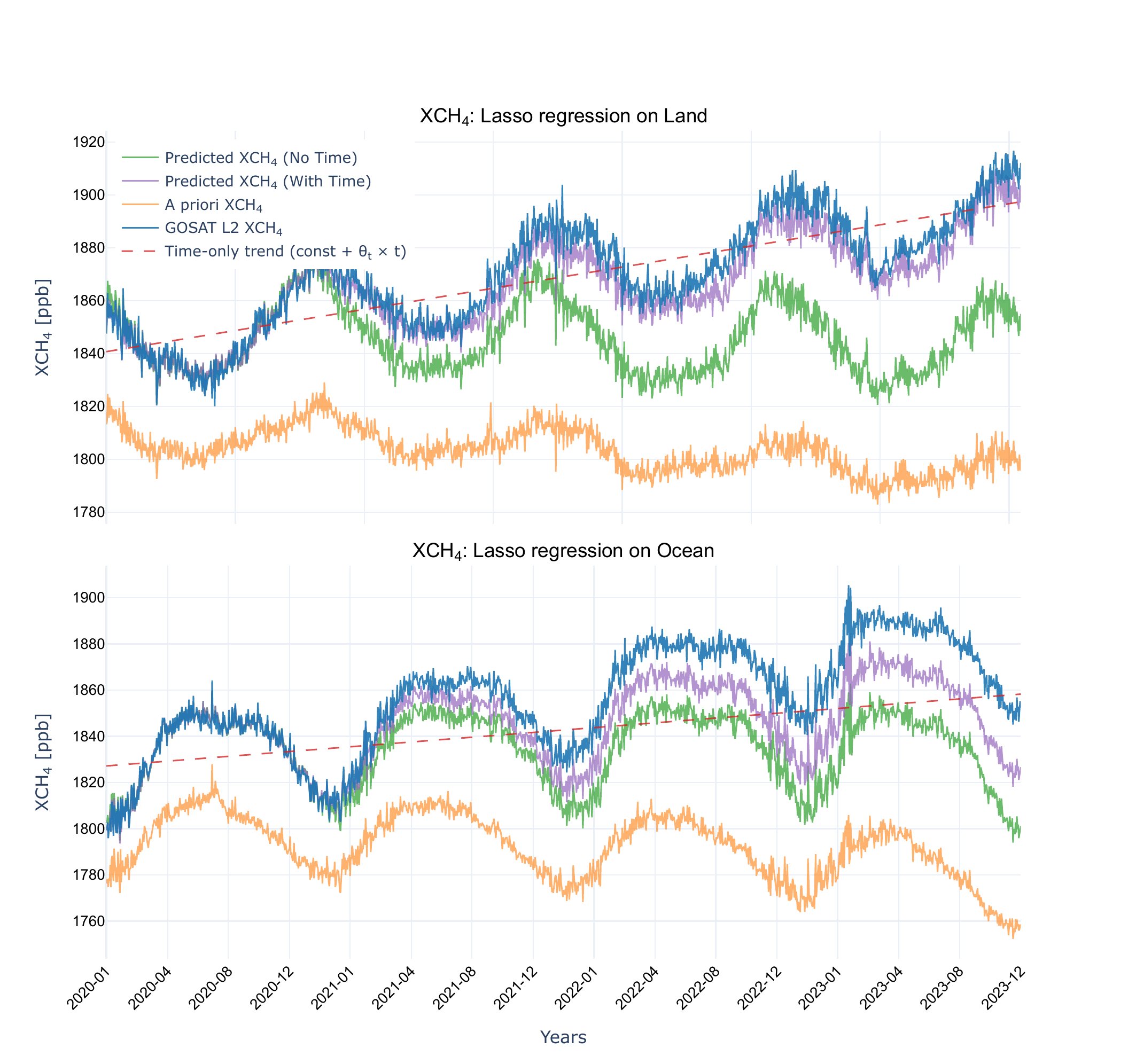}
\caption{Lasso predictions for XCH$_4$.
Predictions over land and ocean from 2020 to 2023 are shown. Predictions
without time (green) and with time (purple) are compared with the GOSAT
Level~2 retrievals (blue) and the a priori values (orange). The dashed red
line shows the fitted linear time component, $\mathrm{const}+\theta_t t$, from
the time-augmented Lasso. The models were trained on 2020 data using a random
80/20 train--test split and then evaluated on 2021--2023 data.}
    \label{fig:Lasso_timeline_xch4_land_ocean}
\end{figure}

\begin{figure}[t]
\centering
\includegraphics[width=0.95\textwidth]{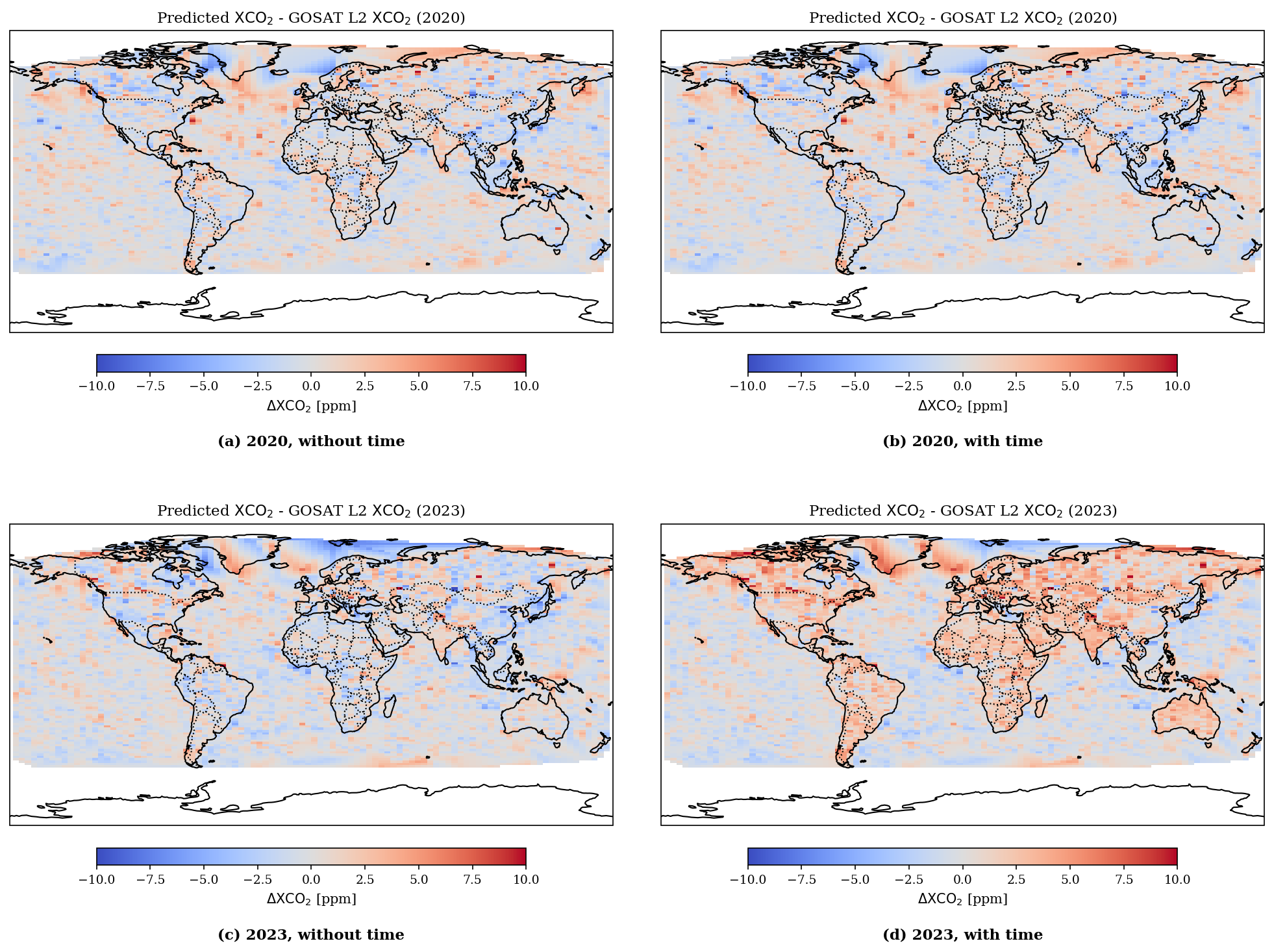}
\caption{Global residual maps for XCO$_2$.
Residual maps for 2020 and 2023 are shown for Lasso predictions without and
with the time feature. Residuals are defined as prediction minus GOSAT and are
aggregated on 2.5$^\circ$ grid boxes. The left column shows the model without
time, and the right column shows the time-augmented model. The top row shows
2020, and the bottom row shows 2023. Values are expressed in ppm.}
\label{fig:F10_GlobalResidual_co2_time_comparison}
\end{figure}

\begin{figure}[t]
\centering
\includegraphics[width=0.95\textwidth]{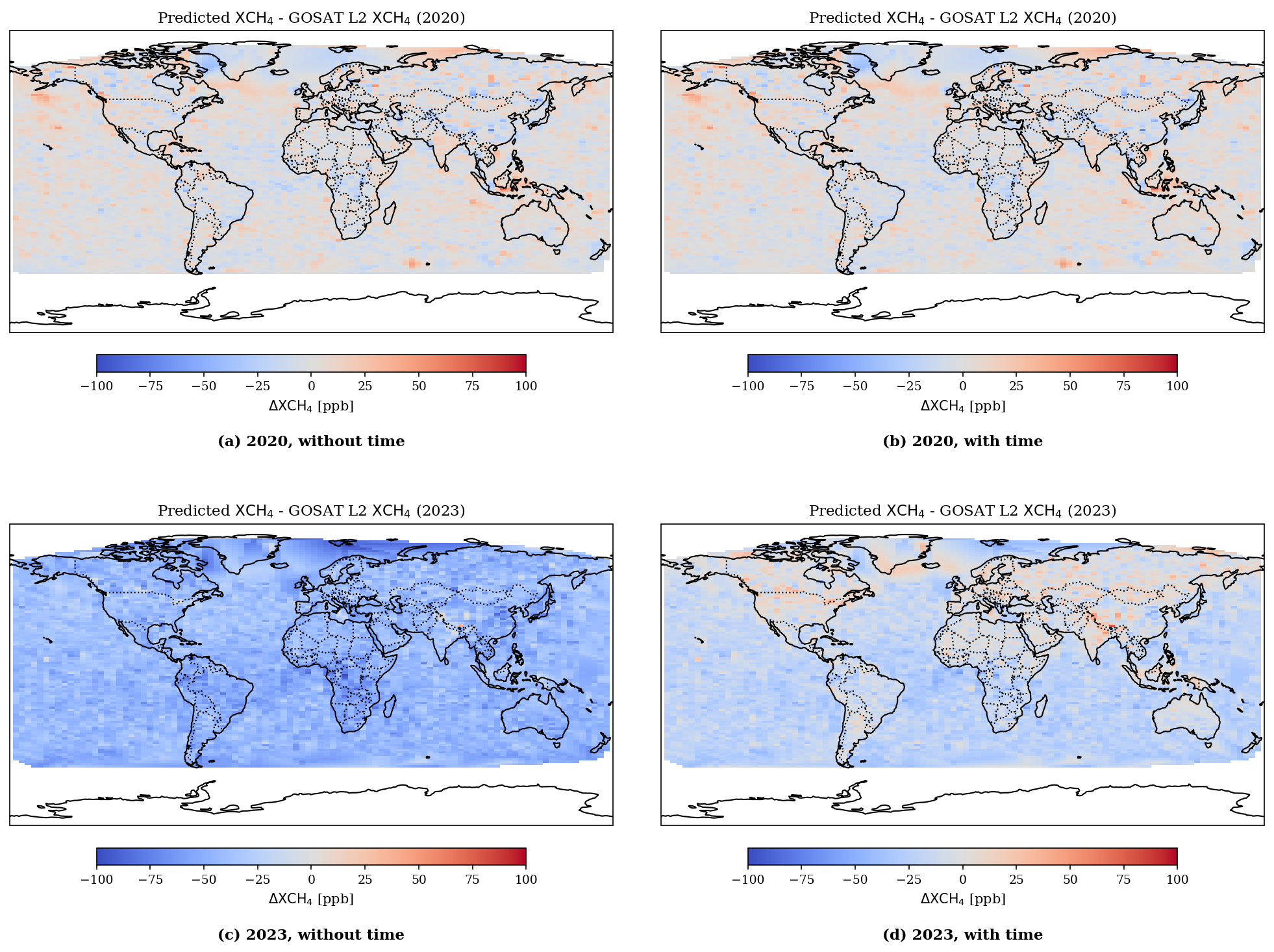}
\caption{Global residual maps for XCH$_4$.
Residual maps for 2020 and 2023 are shown for Lasso predictions without and
with the time feature. Residuals are defined as prediction minus GOSAT and are
aggregated on 2.5$^\circ$ grid boxes. The left column shows the model without
time, and the right column shows the time-augmented model. The top row shows
2020, and the bottom row shows 2023. Values are expressed in ppb.}
\label{fig:F11_GlobalResidual_ch4_time_comparison}
\end{figure}

We now examine the temporal and spatial behaviour of the Lasso in more detail.
Figures~\ref{fig:Lasso_timeline_xco2_land_ocean}
and~\ref{fig:Lasso_timeline_xch4_land_ocean}
show representative Lasso fits for XCO$_2$ and XCH$_4$, respectively, over land
and ocean, comparing the versions with and without the time feature. In each
case, the model was trained on a random 80\% subset of the 2020 data and then
evaluated on the remaining 2020 data and on the out-of-time data from 2021 to
2023. The figures also show the corresponding GOSAT Level~2 retrievals, the
a priori values, and, for the time-augmented model, the fitted linear time
component. These plots help explain why adding time has little effect for
XCO$_2$ but a substantial effect for XCH$_4$.

For XCO$_2$, the Lasso already tracks the retrievals well without the time
feature, both over land and over ocean. Adding time changes the predictions only
slightly. Over ocean, the two versions are almost indistinguishable throughout
the evaluation period. Over land, the time-augmented model shows a mild upward
drift in later years and becomes slightly less accurate by 2023. This is
consistent with the interpretation that, for XCO$_2$, the spectra and a priori
inputs already capture much of the relevant long-term variation. An additional
linear time term therefore adds little and may slightly over-correct in later
years.

For XCH$_4$, the behaviour is different. Without the time feature, the Lasso
fits the 2020 relation between the retrieval and the a priori XCH$_4$ reasonably
well, but increasingly underestimates the retrievals from 2021 onward, both over
land and over ocean. Adding time corrects much of this negative drift, but its
effect differs by surface type. Over land, the time-augmented Lasso remains
close to the GOSAT retrievals throughout 2021--2023. Over ocean, it reduces the
large underestimation of the model without time, but a negative residual remains,
especially in later years. Thus, for XCH$_4$, the time feature captures an
important temporal component missing from the other inputs, but a single linear
time term does not remove all surface-dependent discrepancies.

Figures~\ref{fig:F10_GlobalResidual_co2_time_comparison}
and~\ref{fig:F11_GlobalResidual_ch4_time_comparison}
provide a spatial counterpart to the time-series diagnostics. They show global
residual maps for 2020 and 2023, comparing the Lasso models without and with
the time feature. Residuals are defined as prediction minus GOSAT and are
aggregated on $2.5^\circ$ grid boxes.

For XCO$_2$, the residuals are small in 2020 for both models. In 2023, the
time-augmented model shows a broader positive residual pattern than the model
without time over land regions. This is consistent with the
time-series plots, where the time term leads to mild upward over-correction in
later years. Some regional residual structure remains, including high northern
latitudes and Greenland.

For XCH$_4$, the 2023 residual maps show a different pattern. Without time, the
model leaves a broad negative residual, consistent with the underestimation seen
in the time-series plots. Adding time reduces much of this negative residual,
especially over land. Over ocean, however, negative residuals remain in later
years. Thus, the time feature corrects an important part of the temporal drift
in XCH$_4$, but it does not remove all surface-dependent residual structure.

Taken together, the temporal and spatial diagnostics explain the main results in
Section~\ref{sec:prediction-results}. For XCO$_2$, explicit time information is
of limited value and can lead to mild over-correction in later years. For
XCH$_4$, the time feature is important because it removes much of the negative
drift seen without time, although the improvement is not uniform across surface
types. These results motivate the more detailed coefficient and TCCON analyses
below.

\subsection{Feature Importance}
\label{sec:feature-importance}

\begin{figure}[p]
    \centering
    \includegraphics[width=0.95\linewidth]{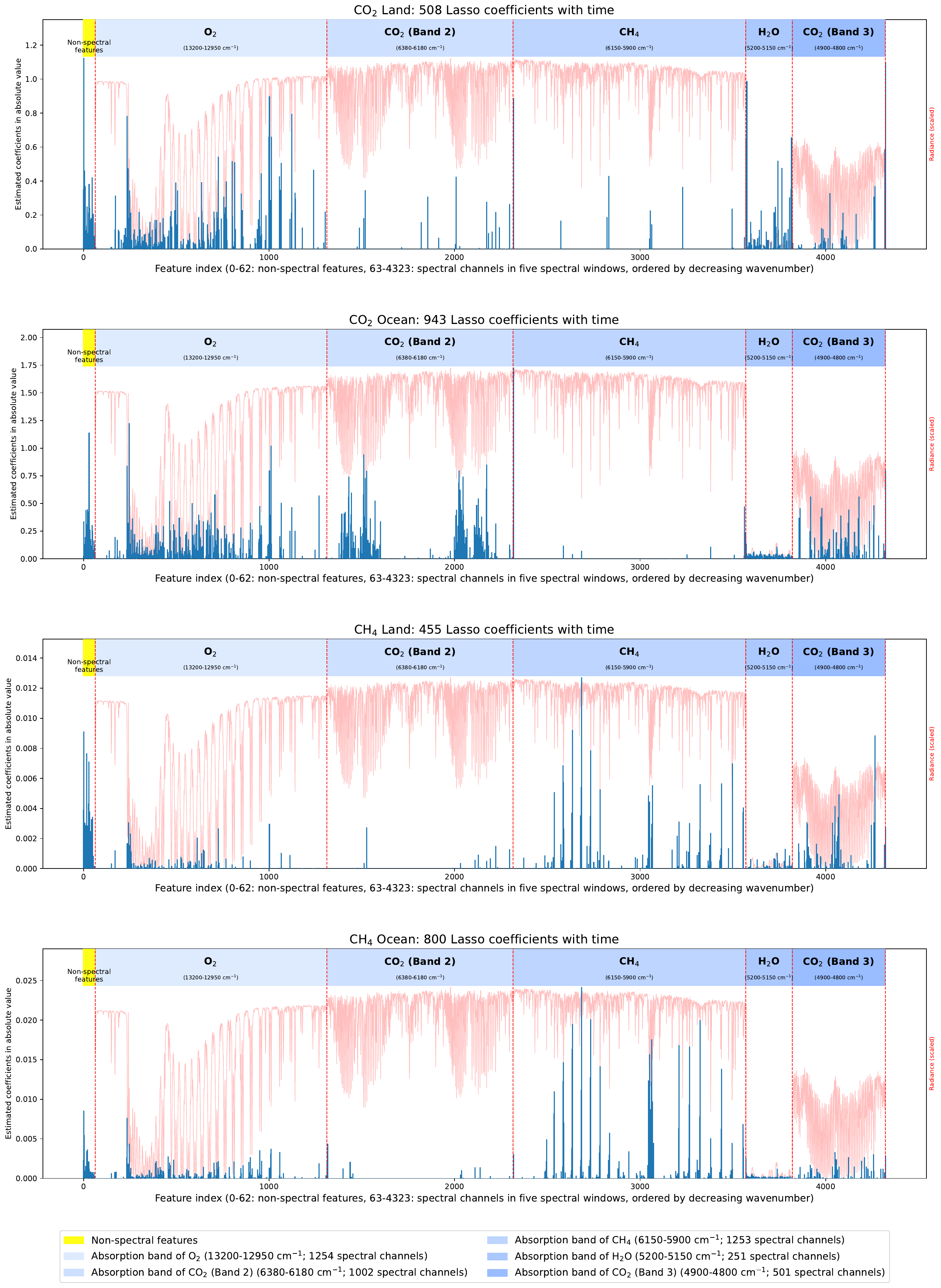}
\caption{Fitted Lasso coefficients and radiance spectra.
Absolute values of the fitted Lasso coefficients are shown in blue, together
with the corresponding radiance spectra in red. Here, ``non-spectral features''
denotes the input variables other than radiance channels, including the a priori
variables, geometry-related variables, Doppler variables, location and height,
surface pressure, temperature-profile variables, and time. See
Section~\ref{sec:feature-importance} for details.}
    \label{fig:Lasso-features-spectra}
\end{figure}

We analyze the coefficients from a single fit of the time-augmented Lasso. The aim is to identify which inputs are selected or weighted by the Lasso
emulator, not to explain how the retrieval algorithm itself uses the input features.
Figure~\ref{fig:Lasso-features-spectra} shows the nonzero coefficients from a model trained on a random 80\% subset of the 2020 data, separately for land and ocean, together with the corresponding radiance spectra.
Tables~\ref{tab:co2_lasso_time_top40_land_ocean} and~\ref{tab:ch4_lasso_time_top40_land_ocean} in Appendix~\ref{app:additional-tables} report the top 40 coefficients from the same fit.

In each panel of Figure~\ref{fig:Lasso-features-spectra}, the blue bars show the absolute values of the fitted Lasso coefficients, and the red curve shows the mean radiance spectrum, rescaled for visualization. From top to bottom, the panels correspond to XCO$_2$ over land, XCO$_2$ over ocean, XCH$_4$ over land, and XCH$_4$ over ocean. Coefficient magnitudes should be interpreted within each fit, not as quantities directly comparable across target gases or surface types.

The horizontal axis is the feature index and spans the 4324 input features of the time-augmented model.
Indices 0--62 correspond to non-spectral variables, including time, a priori profiles, geometry, and related quantities. Indices 63--4323 correspond to spectral radiance channels. These channels are sorted in decreasing wavenumber order and partitioned into five windows: O$_2$
(13200--12950~cm$^{-1}$), CO$_2$ (Band~2; 6380--6180~cm$^{-1}$), CH$_4$
(6150--5900~cm$^{-1}$), H$_2$O (5200--5150~cm$^{-1}$), and CO$_2$ (Band~3;
4900--4800~cm$^{-1}$). The dashed vertical lines mark the corresponding window limits in feature-index coordinates. Only nonzero coefficients are shown.

We now interpret Figure~\ref{fig:Lasso-features-spectra} case by case. The interpretation is about the fitted emulator: it identifies inputs useful for emulation, not how the retrieval algorithm itself uses the input features.

\noindent\textbf{XCO$_2$ over land.}
The largest contributions come from non-spectral input variables, the O$_2$
window, and selected channels in the H$_2$O and CO$_2$ (Band~3) windows.
The CO$_2$ window in Band~2 is less prominent. This suggests that, over land,
the fitted Lasso uses a combination of a priori variables and spectral channels
related not only to CO$_2$ absorption, but also to light-path and
dry-air-normalization effects. This is plausible because XCO$_2$ is a
column-averaged dry-air mole fraction, so accurate accounting for total air
mass, light-path modification by atmospheric particles, and water vapour is
important in addition to sensitivity to CO$_2$ absorption itself. This
interpretation is consistent with Table~\ref{tab:co2_lasso_time_top40_land_ocean},
where the top-ranked coefficients for XCO$_2$ over land include {\tt xco2\_ap}
and channels in the O$_2$, H$_2$O, and CO$_2$ (Band~3) windows.

\medskip

\noindent\textbf{XCO$_2$ over ocean.}
Over ocean, the O$_2$ window and the CO$_2$ windows in Bands~2 and~3 receive
nonzero coefficients over a broader range of channels, while the CH$_4$ window
is much less prominent. Compared with land, the prediction therefore appears to
rely more broadly on spectral channels, with the CO$_2$ window in Band~2
contributing in addition to the O$_2$ window and the CO$_2$ window in Band~3.

\medskip

\noindent\textbf{XCH$_4$ over land.}
The largest spectral contribution comes from the CH$_4$ window. Non-spectral
variables also receive nonzero coefficients, and the O$_2$ window and the CO$_2$ window in Band~3 make additional contributions. Thus, although the CH$_4$ window is dominant, the fitted model does not rely on it alone. The other selected windows may provide complementary information about light-path effects or retrieval conditions, rather than direct CH$_4$ absorption.

\medskip

\noindent\textbf{XCH$_4$ over ocean.}
Over ocean, the largest spectral contribution again comes from the CH$_4$
window. Non-spectral variables also receive nonzero coefficients, while the
O$_2$ window and the CO$_2$ window in Band~3 make additional contributions.
Thus, the fitted model relies mainly on the CH$_4$ absorption window, with
secondary contributions from auxiliary variables and other spectral windows.

Overall, Figure~\ref{fig:Lasso-features-spectra} shows that the Lasso does not
use the radiance spectrum uniformly. Instead, the nonzero coefficients are
concentrated on a subset of input features, and the relevant spectral windows
differ by target gas and surface type. For XCO$_2$, the selected input features
include non-spectral variables and channels in the O$_2$ and CO$_2$ windows. For
XCH$_4$, the largest spectral coefficients are concentrated in the CH$_4$
window, with additional contributions from non-spectral variables and other
spectral windows.
This suggests that the fitted Lasso selects spectral channels and auxiliary
input variables that are broadly consistent with the absorption bands and
auxiliary information relevant to greenhouse-gas retrieval, while remaining
sparse and interpretable.

Tables~\ref{tab:co2_lasso_time_top40_land_ocean} and~\ref{tab:ch4_lasso_time_top40_land_ocean} in Appendix~\ref{app:additional-tables} complement Figure~\ref{fig:Lasso-features-spectra} by listing the top-ranked individual features, including their coefficients and, for spectral channels, their
wavenumbers.

\noindent\textbf{Non-spectral features for XCO$_2$.}
For XCO$_2$ over land, the largest non-spectral coefficient is assigned to the column-mean a priori XCO$_2$, {\tt xco2\_ap}. Other non-spectral features, including level-wise CO$_2$ a priori variables, temperature variables, geometry variables, and time, also contribute, but with smaller coefficients. Over ocean, by contrast, the top-ranked coefficients are dominated by spectral channels, with only limited contributions from non-spectral variables such as solar-zenith geometry and latitude. This suggests that the fitted emulator uses a different balance of spectral and non-spectral information over land and ocean.

\noindent\textbf{Non-spectral features for XCH$_4$.}
For XCH$_4$ over land, non-spectral features contribute substantially. The fitted Lasso assigns relatively large coefficients to the column-mean a priori XCH$_4$, {\tt xch4\_ap}, a level-wise CH$_4$ a priori variable, Doppler velocity, geometry-related variables, and time. Thus, the emulator is not driven by the CH$_4$ window alone, even though that window provides the largest spectral contribution. Some channels in the CO$_2$ window in Band~3 also appear among the top-ranked coefficients, suggesting that they provide complementary information about observation or retrieval conditions rather than direct CH$_4$ absorption. Over ocean, by contrast, the largest coefficients are more concentrated in the CH$_4$ window, while non-spectral variables play secondary roles. This again suggests that the fitted emulator uses a different balance of spectral and non-spectral information over land and ocean.

\noindent\textbf{Time feature.}
The coefficient on time is better interpreted as part of the fitted temporal correction than as an ordinary variable-importance score. A modest coefficient on time therefore does not imply a minor role in out-of-time prediction. As shown above, the time term captures an important part of the temporal drift for XCH$_4$, although its effect differs by surface type. The main predictive information remains spectral and, in some cases, a priori.

\subsection{TCCON Validation} \label{sec:TCCON-validation}

We use TCCON validation to assess the Lasso emulator against an external
ground-based reference. This complements the comparison against the GOSAT
retrievals, which are the training targets. It also allows us to compare the
emulator--TCCON error with the GOSAT--TCCON discrepancy, and to assess whether
including time improves external validation performance.

For this validation, we refitted the Lasso models on the full 2020 data and
applied them to the TCCON-matched dataset. The regularization parameter
\(\lambda\) was not reselected; it was fixed at the value selected in the
corresponding 80\% training experiment (see Appendix~\ref{app:Lasso}).
Because TCCON is a ground-based network, the matched dataset is dominated by
land-based collocations. We therefore omit the ocean subset from the
surface-type summary, since too few ocean collocations were available for
reliable interpretation. Here and below, ``Other'' denotes collocated satellite
footprints that are not classified as pure land or pure ocean. The TCCON sites
used in this study are listed in Table~\ref{tab:tccon_sites}.

We report NRMSE, bias, and residual standard deviation. Bias and residual
standard deviation are reported in physical units: ppm for XCO$_2$ and ppb for
XCH$_4$. For each comparison, let
\[
e_i = y_i^{(1)} - y_i^{(2)}, \qquad i=1,\ldots,N,
\]
where \(N\) is the number of collocated samples, \(y_i^{(1)}\) is the quantity
being evaluated, and \(y_i^{(2)}\) is the reference quantity. Thus,
\(y_i^{(1)}\) is either the Lasso prediction or the GOSAT value, and
\(y_i^{(2)}\) is either the GOSAT value or the TCCON value, depending on the
comparison.

The NRMSE is computed with respect to the reference quantity:
\[
\mathrm{NRMSE}
=
\left\{
\frac{1}{N}\sum_{i=1}^{N}
\left(
\frac{y_i^{(1)}-y_i^{(2)}}{y_i^{(2)}}
\right)^2
\right\}^{1/2}.
\]
The bias and residual standard deviation are
\[
\mathrm{Bias}=\frac{1}{N}\sum_{i=1}^{N} e_i,
\qquad
\mathrm{STD}
=
\left\{
\frac{1}{N}\sum_{i=1}^{N}
\left(e_i-\mathrm{Bias}\right)^2
\right\}^{1/2}.
\]
The NRMSE results are summarized by year in
\autoref{tab:nrmse_yearly_pooled} and by surface type in
\autoref{tab:nrmse_surface_pooled}.
The corresponding bias and residual standard deviation values are reported by
year in Table~\ref{tab:bias_std_yearly}, and by surface type in
Table~\ref{tab:bias_std_surface}.

\begin{table}[h]
\centering
\caption{TCCON-matched NRMSE by year.
NRMSE is reported for the pooled 2021--2023 sample and separately by year.
The reference column indicates whether the error is computed against GOSAT or
TCCON. Here \(N\) denotes the number of matched GOSAT--TCCON samples.}
\label{tab:nrmse_yearly_pooled}

\normalsize
\renewcommand{\arraystretch}{1.15}
\setlength{\tabcolsep}{3.0pt}

\begin{tabular}{@{}c c l cc cc cc@{}}
\toprule
\textbf{Year} & \textbf{\(N\)} & \textbf{Reference}
& \multicolumn{2}{c}{\textbf{Lasso no time}}
& \multicolumn{2}{c}{\textbf{Lasso with time}}
& \multicolumn{2}{c}{\textbf{GOSAT}} \\
& & & \textbf{XCO$_2$} & \textbf{XCH$_4$}
& \textbf{XCO$_2$} & \textbf{XCH$_4$}
& \textbf{XCO$_2$} & \textbf{XCH$_4$} \\
\midrule
\multirow{2}{*}{2021--23} & \multirow{2}{*}{5225}
& GOSAT & 0.00614 & 0.01821 & 0.00582 & 0.00762 & -- & -- \\
& & TCCON & 0.00490 & 0.01796 & 0.00367 & 0.00646 & 0.00581 & 0.00712 \\
\midrule
\multirow{2}{*}{2021} & \multirow{2}{*}{2201}
& GOSAT & 0.00600 & 0.01201 & 0.00547 & 0.00741 & -- & -- \\
& & TCCON & 0.00475 & 0.01191 & 0.00381 & 0.00639 & 0.00595 & 0.00702 \\
\midrule
\multirow{2}{*}{2022} & \multirow{2}{*}{1671}
& GOSAT & 0.00618 & 0.01907 & 0.00564 & 0.00801 & -- & -- \\
& & TCCON & 0.00500 & 0.01897 & 0.00303 & 0.00709 & 0.00573 & 0.00731 \\
\midrule
\multirow{2}{*}{2023} & \multirow{2}{*}{1353}
& GOSAT & 0.00630 & 0.02444 & 0.00656 & 0.00747 & -- & -- \\
& & TCCON & 0.00503 & 0.02388 & 0.00413 & 0.00572 & 0.00568 & 0.00702 \\
\bottomrule
\end{tabular}
\end{table}

\begin{table}[h]
\centering
\caption{TCCON-matched NRMSE by surface type.
NRMSE is reported for the pooled 2021--2023 period. The ocean subset is omitted
because too few collocations were available for reliable interpretation. The
reference column specifies whether the error is computed relative to GOSAT or
TCCON. Here \(N\) denotes the number of matched GOSAT--TCCON samples in each
surface category.}
\label{tab:nrmse_surface_pooled}

\normalsize
\renewcommand{\arraystretch}{1.15}
\setlength{\tabcolsep}{3pt}

\begin{tabular}{c c l c c c c c c}
\toprule
\textbf{Surface} & \textbf{\(N\)} & \textbf{Reference}
& \multicolumn{2}{c}{\textbf{Lasso no time}}
& \multicolumn{2}{c}{\textbf{Lasso with time}}
& \multicolumn{2}{c}{\textbf{GOSAT}} \\
& &
& \textbf{XCO$_2$} & \textbf{XCH$_4$}
& \textbf{XCO$_2$} & \textbf{XCH$_4$}
& \textbf{XCO$_2$} & \textbf{XCH$_4$} \\
\midrule
\multirow{2}{*}{All} & \multirow{2}{*}{5225}
& GOSAT & 0.00614 & 0.01821 & 0.00582 & 0.00762 & -- & -- \\
& & TCCON & 0.00490 & 0.01796 & 0.00367 & 0.00646 & 0.00581 & 0.00712 \\
\midrule
\multirow{2}{*}{Land} & \multirow{2}{*}{4178}
& GOSAT & 0.00610 & 0.01836 & 0.00550 & 0.00762 & -- & -- \\
& & TCCON & 0.00511 & 0.01805 & 0.00358 & 0.00654 & 0.00566 & 0.00713 \\
\midrule
\multirow{2}{*}{Other} & \multirow{2}{*}{1047}
& GOSAT & 0.00626 & 0.01763 & 0.00696 & 0.00762 & -- & -- \\
& & TCCON & 0.00400 & 0.01759 & 0.00400 & 0.00614 & 0.00638 & 0.00708 \\
\bottomrule
\end{tabular}
\end{table}

\begin{table}[htbp]
\centering
\caption{TCCON-matched bias and residual standard deviation by year.
Results are reported for the pooled 2021--2023 sample and separately by year.
Each entry gives the bias, with the residual standard deviation in parentheses.
Units are ppm for XCO$_2$ and ppb for XCH$_4$. The reference column specifies
whether the error is computed relative to GOSAT or TCCON. Here \(N\) denotes
the number of matched GOSAT--TCCON samples.}
\label{tab:bias_std_yearly}

\small
\renewcommand{\arraystretch}{1.08}
\setlength{\tabcolsep}{4pt}

\begin{tabular}{c c l c c c}
\toprule
\multicolumn{6}{c}{\normalsize\textbf{Panel A: XCO$_2$}} \\
\hline
\textbf{Year} & \textbf{\(N\)} & \textbf{Reference}
& \textbf{Lasso no time}
& \textbf{Lasso with time}
& \textbf{GOSAT} \\
\hline
2021--23 & 5225 & GOSAT
& \begin{tabular}[c]{@{}c@{}}-1.0756\\[0.15ex](2.3295)\end{tabular}
& \begin{tabular}[c]{@{}c@{}}0.2746\\[0.15ex](2.4080)\end{tabular}
& -- \\[0.65ex]
2021--23 & 5225 & TCCON
& \begin{tabular}[c]{@{}c@{}}-1.4570\\[0.15ex](1.4413)\end{tabular}
& \begin{tabular}[c]{@{}c@{}}-0.1068\\[0.15ex](1.5286)\end{tabular}
& \begin{tabular}[c]{@{}c@{}}-0.3814\\[0.15ex](2.3918)\end{tabular} \\[0.65ex]
\hline
2021 & 2201 & GOSAT
& \begin{tabular}[c]{@{}c@{}}-1.1275\\[0.15ex](2.2356)\end{tabular}
& \begin{tabular}[c]{@{}c@{}}-0.3780\\[0.15ex](2.2455)\end{tabular}
& -- \\[0.65ex]
2021 & 2201 & TCCON
& \begin{tabular}[c]{@{}c@{}}-1.3231\\[0.15ex](1.4695)\end{tabular}
& \begin{tabular}[c]{@{}c@{}}-0.5736\\[0.15ex](1.4781)\end{tabular}
& \begin{tabular}[c]{@{}c@{}}-0.1956\\[0.15ex](2.4640)\end{tabular} \\[0.65ex]
\hline
2022 & 1671 & GOSAT
& \begin{tabular}[c]{@{}c@{}}-1.1207\\[0.15ex](2.3331)\end{tabular}
& \begin{tabular}[c]{@{}c@{}}0.3471\\[0.15ex](2.3247)\end{tabular}
& -- \\[0.65ex]
2022 & 1671 & TCCON
& \begin{tabular}[c]{@{}c@{}}-1.6710\\[0.15ex](1.2542)\end{tabular}
& \begin{tabular}[c]{@{}c@{}}-0.2032\\[0.15ex](1.2503)\end{tabular}
& \begin{tabular}[c]{@{}c@{}}-0.5503\\[0.15ex](2.3252)\end{tabular} \\[0.65ex]
\hline
2023 & 1353 & GOSAT
& \begin{tabular}[c]{@{}c@{}}-0.9353\\[0.15ex](2.4651)\end{tabular}
& \begin{tabular}[c]{@{}c@{}}1.2467\\[0.15ex](2.4232)\end{tabular}
& -- \\[0.65ex]
2023 & 1353 & TCCON
& \begin{tabular}[c]{@{}c@{}}-1.4105\\[0.15ex](1.5745)\end{tabular}
& \begin{tabular}[c]{@{}c@{}}0.7716\\[0.15ex](1.5495)\end{tabular}
& \begin{tabular}[c]{@{}c@{}}-0.4752\\[0.15ex](2.3321)\end{tabular} \\
\toprule
\multicolumn{6}{c}{\normalsize\textbf{Panel B: XCH$_4$}} \\
\hline
\textbf{Year} & \textbf{\(N\)} & \textbf{Reference}
& \textbf{Lasso no time}
& \textbf{Lasso with time}
& \textbf{GOSAT} \\
\hline
2021--23 & 5225 & GOSAT
& \begin{tabular}[c]{@{}c@{}}-30.1307\\[0.15ex](17.0636)\end{tabular}
& \begin{tabular}[c]{@{}c@{}}-4.7126\\[0.15ex](13.6949)\end{tabular}
& -- \\[0.65ex]
2021--23 & 5225 & TCCON
& \begin{tabular}[c]{@{}c@{}}-30.6987\\[0.15ex](14.6443)\end{tabular}
& \begin{tabular}[c]{@{}c@{}}-5.2806\\[0.15ex](11.0056)\end{tabular}
& \begin{tabular}[c]{@{}c@{}}-0.5680\\[0.15ex](13.4254)\end{tabular} \\[0.65ex]
\hline
2021 & 2201 & GOSAT
& \begin{tabular}[c]{@{}c@{}}-18.6757\\[0.15ex](12.9798)\end{tabular}
& \begin{tabular}[c]{@{}c@{}}-4.7214\\[0.15ex](13.2029)\end{tabular}
& -- \\[0.65ex]
2021 & 2201 & TCCON
& \begin{tabular}[c]{@{}c@{}}-19.8573\\[0.15ex](10.5254)\end{tabular}
& \begin{tabular}[c]{@{}c@{}}-5.9030\\[0.15ex](10.5217)\end{tabular}
& \begin{tabular}[c]{@{}c@{}}-1.1816\\[0.15ex](13.1606)\end{tabular} \\[0.65ex]
\hline
2022 & 1671 & GOSAT
& \begin{tabular}[c]{@{}c@{}}-33.5130\\[0.15ex](13.6683)\end{tabular}
& \begin{tabular}[c]{@{}c@{}}-6.1390\\[0.15ex](13.9758)\end{tabular}
& -- \\[0.65ex]
2022 & 1671 & TCCON
& \begin{tabular}[c]{@{}c@{}}-34.0497\\[0.15ex](11.2250)\end{tabular}
& \begin{tabular}[c]{@{}c@{}}-6.6758\\[0.15ex](11.5978)\end{tabular}
& \begin{tabular}[c]{@{}c@{}}-0.5367\\[0.15ex](13.7872)\end{tabular} \\[0.65ex]
\hline
2023 & 1353 & GOSAT
& \begin{tabular}[c]{@{}c@{}}-44.5881\\[0.15ex](13.5286)\end{tabular}
& \begin{tabular}[c]{@{}c@{}}-2.9367\\[0.15ex](13.9216)\end{tabular}
& -- \\[0.65ex]
2023 & 1353 & TCCON
& \begin{tabular}[c]{@{}c@{}}-44.1965\\[0.15ex](10.2514)\end{tabular}
& \begin{tabular}[c]{@{}c@{}}-2.5452\\[0.15ex](10.5366)\end{tabular}
& \begin{tabular}[c]{@{}c@{}}0.3915\\[0.15ex](13.3414)\end{tabular} \\
\hline
\end{tabular}
\end{table}

\begin{table}[htbp]
\centering
\caption{TCCON-matched bias and residual standard deviation by surface type.
Results are reported for the pooled 2021--2023 period. Each entry gives the
bias, with the residual standard deviation in parentheses. Units are ppm for
XCO$_2$ and ppb for XCH$_4$. The reference column specifies whether the error
is computed relative to GOSAT or TCCON. Here \(N\) denotes the number of
matched GOSAT--TCCON samples in each surface category.}
\label{tab:bias_std_surface}

\small
\renewcommand{\arraystretch}{1.08}
\setlength{\tabcolsep}{4pt}

\begin{tabular}{c c l c c c}
\toprule
\multicolumn{6}{c}{\normalsize\textbf{Panel A: XCO$_2$}} \\
\hline
\textbf{Surface} & \textbf{\(N\)} & \textbf{Reference}
& \textbf{Lasso no time}
& \textbf{Lasso with time}
& \textbf{GOSAT} \\
\hline
All & 5225 & GOSAT
& \begin{tabular}[c]{@{}c@{}}-1.0756\\[0.15ex](2.3295)\end{tabular}
& \begin{tabular}[c]{@{}c@{}}0.2746\\[0.15ex](2.4080)\end{tabular}
& -- \\[0.65ex]
All & 5225 & TCCON
& \begin{tabular}[c]{@{}c@{}}-1.4570\\[0.15ex](1.4413)\end{tabular}
& \begin{tabular}[c]{@{}c@{}}-0.1068\\[0.15ex](1.5286)\end{tabular}
& \begin{tabular}[c]{@{}c@{}}-0.3814\\[0.15ex](2.3918)\end{tabular} \\[0.65ex]
\hline
Land & 4178 & GOSAT
& \begin{tabular}[c]{@{}c@{}}-1.2703\\[0.15ex](2.2182)\end{tabular}
& \begin{tabular}[c]{@{}c@{}}0.1111\\[0.15ex](2.2900)\end{tabular}
& -- \\[0.65ex]
Land & 4178 & TCCON
& \begin{tabular}[c]{@{}c@{}}-1.6342\\[0.15ex](1.3730)\end{tabular}
& \begin{tabular}[c]{@{}c@{}}-0.2528\\[0.15ex](1.4751)\end{tabular}
& \begin{tabular}[c]{@{}c@{}}-0.3639\\[0.15ex](2.3318)\end{tabular} \\[0.65ex]
\hline
Other & 1047 & GOSAT
& \begin{tabular}[c]{@{}c@{}}-0.2985\\[0.15ex](2.5870)\end{tabular}
& \begin{tabular}[c]{@{}c@{}}0.9272\\[0.15ex](2.7349)\end{tabular}
& -- \\[0.65ex]
Other & 1047 & TCCON
& \begin{tabular}[c]{@{}c@{}}-0.7498\\[0.15ex](1.4898)\end{tabular}
& \begin{tabular}[c]{@{}c@{}}0.4759\\[0.15ex](1.5980)\end{tabular}
& \begin{tabular}[c]{@{}c@{}}-0.4513\\[0.15ex](2.6163)\end{tabular} \\
\toprule
\multicolumn{6}{c}{\normalsize\textbf{Panel B: XCH$_4$}} \\
\hline
\textbf{Surface} & \textbf{\(N\)} & \textbf{Reference}
& \textbf{Lasso no time}
& \textbf{Lasso with time}
& \textbf{GOSAT} \\
\hline
All & 5225 & GOSAT
& \begin{tabular}[c]{@{}c@{}}-30.1307\\[0.15ex](17.0636)\end{tabular}
& \begin{tabular}[c]{@{}c@{}}-4.7126\\[0.15ex](13.6949)\end{tabular}
& -- \\[0.65ex]
All & 5225 & TCCON
& \begin{tabular}[c]{@{}c@{}}-30.6987\\[0.15ex](14.6443)\end{tabular}
& \begin{tabular}[c]{@{}c@{}}-5.2806\\[0.15ex](11.0056)\end{tabular}
& \begin{tabular}[c]{@{}c@{}}-0.5680\\[0.15ex](13.4254)\end{tabular} \\[0.65ex]
\hline
Land & 4178 & GOSAT
& \begin{tabular}[c]{@{}c@{}}-30.4753\\[0.15ex](17.0063)\end{tabular}
& \begin{tabular}[c]{@{}c@{}}-5.1905\\[0.15ex](13.5325)\end{tabular}
& -- \\[0.65ex]
Land & 4178 & TCCON
& \begin{tabular}[c]{@{}c@{}}-30.9662\\[0.15ex](14.4398)\end{tabular}
& \begin{tabular}[c]{@{}c@{}}-5.6814\\[0.15ex](10.9684)\end{tabular}
& \begin{tabular}[c]{@{}c@{}}-0.4909\\[0.15ex](13.4450)\end{tabular} \\[0.65ex]
\hline
Other & 1047 & GOSAT
& \begin{tabular}[c]{@{}c@{}}-28.7559\\[0.15ex](17.2220)\end{tabular}
& \begin{tabular}[c]{@{}c@{}}-2.8059\\[0.15ex](14.1649)\end{tabular}
& -- \\[0.65ex]
Other & 1047 & TCCON
& \begin{tabular}[c]{@{}c@{}}-29.6315\\[0.15ex](15.3873)\end{tabular}
& \begin{tabular}[c]{@{}c@{}}-3.6816\\[0.15ex](11.0083)\end{tabular}
& \begin{tabular}[c]{@{}c@{}}-0.8757\\[0.15ex](13.3422)\end{tabular} \\
\hline
\end{tabular}
\end{table}

\subsubsection{Effect of the Time Feature}

We first compare the Lasso models with and without the time feature, using
TCCON as the reference. The main effect is for XCH$_4$. In the pooled
2021--2023 sample, the NRMSE against TCCON decreases from 0.01796 without time
to 0.00646 with time. The decrease is present in each year. Thus, for
XCH$_4$, the time feature substantially reduces the TCCON validation error
left by the model without time.

For XCO$_2$, the effect is smaller. In the pooled 2021--2023 sample, the
NRMSE against TCCON decreases from 0.00490 without time to 0.00367 with time.
The yearly NRMSE values also decrease, but the gain is much smaller than for
XCH$_4$. The surface-type breakdown gives the same qualitative result: the
main NRMSE improvement from including time is for XCH$_4$.

The bias results support this conclusion. In the pooled 2021--2023 sample, the
XCH$_4$ bias against TCCON changes from \(-30.70\) ppb without time to
\(-5.28\) ppb with time. For XCO$_2$, the corresponding change is from
\(-1.46\) ppm to \(-0.11\) ppm. The absolute XCH$_4$ bias is reduced in each
year. For XCO$_2$, the absolute bias is reduced in 2021 and 2022, but not in
2023, where the bias changes sign.

The residual standard deviations show a different pattern. For XCH$_4$, the
pooled value against TCCON decreases from 14.64 ppb without time to 11.01 ppb
with time, but the year-by-year values do not decrease uniformly. The pooled
decrease is therefore partly due to smaller year-specific biases, not to a
uniform reduction of within-year residual variability. For XCO$_2$, the pooled
value changes only slightly, from 1.44 ppm to 1.53 ppm. Thus, for XCO$_2$, the
time feature mainly reduces bias.

\subsubsection{Reference-dependent Errors of the Time-augmented Lasso}

We now fix the time-augmented Lasso and compare its errors with respect to two
reference quantities: GOSAT and TCCON. This separates the error relative to the
training target from the error relative to the external ground-based reference.

For both target variables, the NRMSE is smaller when TCCON is used as the
reference. In the pooled 2021--2023 sample, the NRMSE against TCCON and
against GOSAT is 0.00367 and 0.00582 for XCO$_2$, and 0.00646 and 0.00762
for XCH$_4$, respectively. The same ordering holds in each year and in both
reported surface categories.

The residual standard deviations show the same ordering. In the pooled
2021--2023 sample, the residual standard deviation against TCCON and against
GOSAT is 1.53 ppm and 2.41 ppm for XCO$_2$, and 11.01 ppb and 13.69 ppb for
XCH$_4$, respectively. The same ordering again holds year by year and by
surface category.

The bias results do not show the same ordering. For XCO$_2$, the pooled bias is
closer to zero against TCCON than against GOSAT: \(-0.11\) ppm versus
\(0.27\) ppm. For XCH$_4$, it is slightly farther from zero against TCCON:
\(-5.28\) ppb versus \(-4.71\) ppb. Thus, the smaller NRMSE against TCCON
cannot be attributed to smaller bias. It is more consistently associated with
smaller residual standard deviation.

The smaller NRMSE against TCCON is notable because the emulator is trained on
GOSAT retrievals, not on TCCON. The main numerical reason is the smaller
residual standard deviation: the Lasso--TCCON residuals vary less around their
mean than the Lasso--GOSAT residuals. A plausible explanation is the limited
flexibility of the Lasso emulator. As a restricted linear model, it need not
reproduce all observation-specific variation in the GOSAT retrievals.
If part of that variation is not shared by TCCON, the Lasso prediction can have
a smaller residual standard deviation against TCCON than against GOSAT on the
TCCON-matched dataset.
The next subsection compares the emulator--TCCON errors with the
GOSAT--TCCON discrepancy.

\subsubsection{Comparison with the GOSAT--TCCON Discrepancy}

We finally compare the time-augmented Lasso--TCCON error with the
GOSAT--TCCON discrepancy. This comparison asks whether the emulator error
against TCCON is of the same order as the GOSAT--TCCON difference on the same
TCCON-matched dataset.

In the pooled 2021--2023 TCCON-matched dataset, the time-augmented Lasso has
smaller NRMSE against TCCON than GOSAT for both targets. For XCO$_2$, the
NRMSE is 0.00367 for the time-augmented Lasso and 0.00581 for GOSAT. For
XCH$_4$, the corresponding values are 0.00646 and 0.00712. The same ordering
holds in each year. Thus, the time-augmented Lasso--TCCON NRMSE is of the same
order as, and in these comparisons smaller than, the GOSAT--TCCON NRMSE.

The bias and residual standard deviation clarify this comparison. For XCO$_2$,
the pooled absolute bias and residual standard deviation are both smaller for
the time-augmented Lasso than for GOSAT: the bias is \(-0.11\) ppm versus
\(-0.38\) ppm, and the residual standard deviation is 1.53 ppm versus
2.39 ppm. The residual standard deviation is also smaller in each year. The
bias comparison, however, is not uniform year by year.

For XCH$_4$, the comparison is different. The time-augmented Lasso has a larger
pooled absolute bias than GOSAT, 5.28 ppb versus 0.57 ppb, but a smaller pooled
residual standard deviation, 11.01 ppb versus 13.43 ppb. The same pattern holds
in each year: GOSAT has the smaller absolute bias, while the time-augmented
Lasso has the smaller residual standard deviation. Thus, for XCH$_4$, the
smaller NRMSE of the emulator against TCCON is mainly due to its smaller
residual standard deviation, not to smaller bias.

The two-dimensional histograms in \autoref{fig:TCCON_comparison} provide a visual check of this comparison. For
XCO$_2$, the time-augmented Lasso has both smaller absolute bias and smaller
residual standard deviation against TCCON than GOSAT. For XCH$_4$, GOSAT has
smaller absolute bias, whereas the time-augmented Lasso has smaller residual
standard deviation. Thus, the histograms are consistent with the numerical
comparison above.

This comparison should be interpreted with care. It does not imply that the
emulator is a more accurate estimate of the true column abundance than the
retrieval algorithm. The same caution applies to the reference-dependent
comparison above, where the time-augmented Lasso has smaller errors against
TCCON than against GOSAT. The Lasso is trained on GOSAT retrievals and, as a
restricted linear model, need not reproduce all observation-specific variation
in those retrievals. If part of that variation is not shared by TCCON, the
emulator can have a smaller residual standard deviation against TCCON than
against GOSAT on the TCCON-matched dataset. The conclusion is therefore
limited: the time-augmented Lasso gives TCCON validation errors of the same
order as the GOSAT--TCCON discrepancy, with slightly smaller NRMSE in these
comparisons, subject to the usual limitations of collocation-based validation.

\begin{figure}[!htb]
    \centering
    \includegraphics[width=0.95\linewidth]{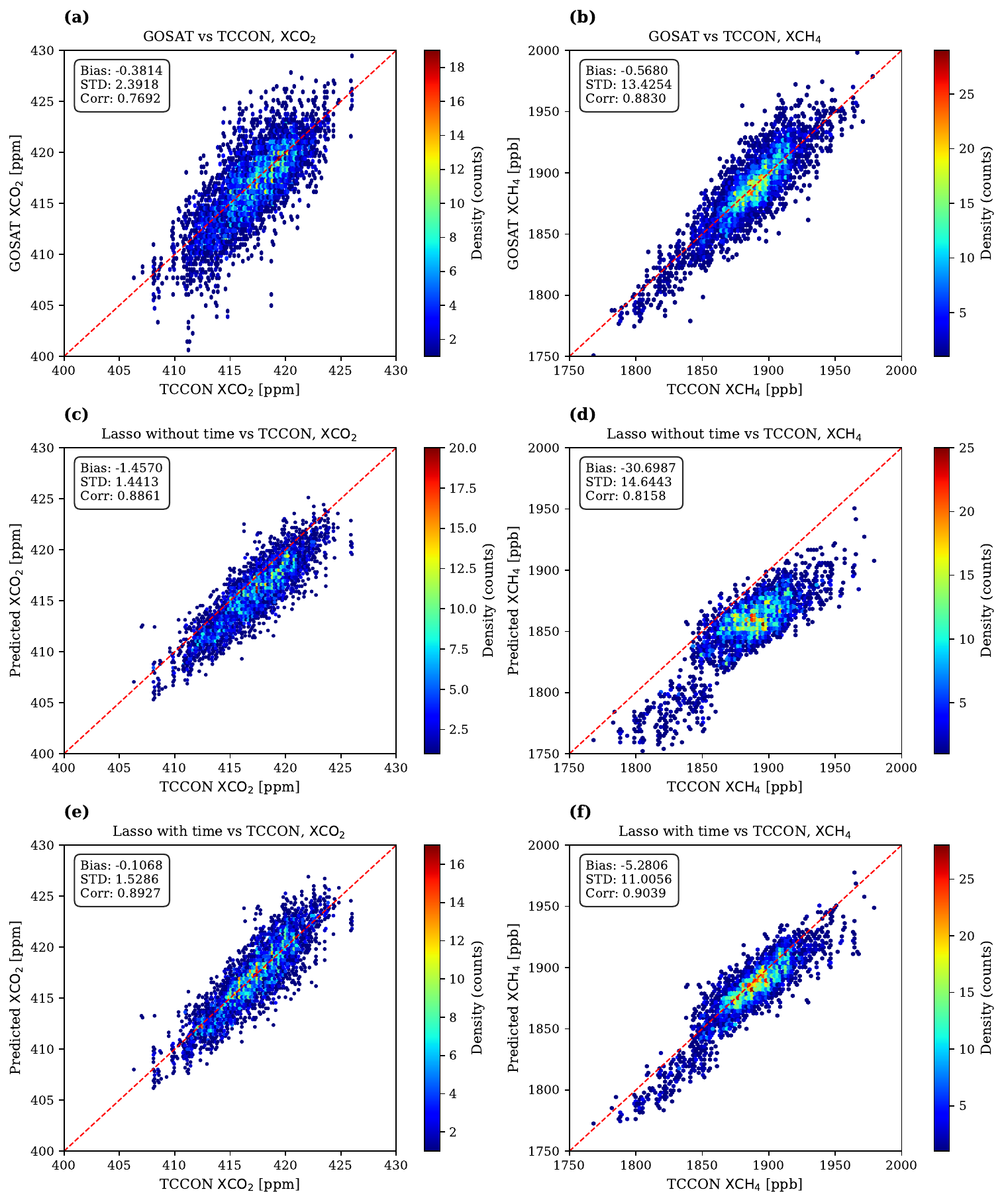}
    \caption{TCCON comparisons for GOSAT retrievals and Lasso predictions.
    Two-dimensional histograms compare GOSAT retrievals and Lasso predictions
    with TCCON values on the 2021--2023 TCCON-matched dataset (\(n=5225\)).
    The left column shows XCO$_2$, and the right column shows XCH$_4$.
    The rows show GOSAT retrievals, Lasso predictions without the time feature,
    and Lasso predictions with the time feature, respectively. The dashed red
    line indicates the one-to-one relationship.}
    \label{fig:TCCON_comparison}
\end{figure}

\clearpage
\section{Conclusion and Future Directions}

This paper studied whether machine-learning emulators of satellite greenhouse-gas retrievals remain accurate when they are trained on past observations and applied to later observations.
This is the operationally relevant setting, but it is not well assessed by random train--test splits drawn from the same period, as used in much of the previous related work. Using GOSAT retrievals for XCO$_2$ and XCH$_4$, we compared several emulators under this later-period prediction setting, examined the effect of adding observation time as an input feature, identified the time-augmented Lasso as the most stable method over time, and validated it against TCCON.

The results show that good in-period accuracy does not guarantee accurate
later-period prediction. Prediction accuracy generally deteriorates as the test
period moves away from the training period, especially for XCH$_4$. Adding a
time feature strongly improves XCH$_4$ prediction for the Lasso and
neural-network models, while its effect on XCO$_2$ is smaller and less uniform.
Among the methods considered, the time-augmented Lasso gives the most stable
prediction performance overall. On the TCCON-matched dataset, its validation
errors are of the same order as the GOSAT--TCCON discrepancy, suggesting that
it can serve as a practical surrogate for the retrieval algorithm, subject to
the usual limitations of collocation-based validation.

A plausible explanation for the stable performance of the Lasso is that the
part of the retrieval mapping relevant to this emulation task is well
approximated by a sparse linear function of the available input features. The
spectral information used by the fitted model is not spread uniformly over all
radiance channels. Instead, nonzero coefficients are concentrated at selected
wavenumbers within the absorption bands, together with selected a priori and
geometry-related variables. This provides one possible explanation for the
observed stability of the Lasso under later-period testing. Thus, the success
of the Lasso does not show that the full retrieval problem is linear. It
suggests only that, for the present emulation task, a sparse linear
approximation is accurate enough and more stable than the more flexible
alternatives considered here.

These findings suggest a possible route toward operational use. An emulator
trained on historical full-physics retrievals could be applied to newly acquired
satellite observations to provide fast preliminary estimates of XCO$_2$ and
XCH$_4$. The Lasso is attractive for this purpose because it is computationally
cheap, easy to retrain as new retrievals become available, and interpretable
through its fitted coefficients. Such an emulator would not replace the full
retrieval algorithm or ground-based validation. Rather, it could provide a
first-pass estimate when rapid concentration estimates are needed. For
operational use, further work is needed on uncertainty quantification,
monitoring of temporal drift, regular retraining, and continued validation as
new data become available.

\newpage

\appendix

\section{A Population-level View of Out-of-time Deterioration and Time Augmentation} \label{sec:learning-theory}
This appendix gives a population-level explanation of why out-of-time prediction may deteriorate and why adding time can improve it.

For each period \(k\in\{0,1,2,3\}\), let \(P_k\) denote the joint distribution of \((X,T)\), where \(X\) is the original emulator input vector excluding time, and \(T\) is the normalized observation time. Here \(k=0,1,2,3\) correspond to 2020, 2021, 2022, and 2023, respectively. We write \(E_k[\cdot]\) for expectation under \(P_k\).

The retrieval output is
\[
Y=f^*(X),
\]
where \(f^*\) is the fixed retrieval map. Thus, the retrieval map itself does not change across periods; what changes is the input distribution \(P_k\).

Time \(T\) is not an input to the retrieval algorithm itself. It is included only as an additional feature for the emulator. Therefore, adding time does not change the target \(Y\). It only enlarges the class of predictors used to approximate the same fixed map \(f^*\).

\subsection{Population risk and out-of-time deterioration}

Let
\[
\mathcal F=\{f_\theta:\theta\in\Theta\}
\]
be a class of predictors, where \(f_\theta\) may depend on both \(X\) and \(T\).
For period \(k\), define the population risk
\[
R_k(\theta):=E_k\!\left[\bigl(Y-f_\theta(X,T)\bigr)^2\right].
\]
This is the mean squared prediction error of \(f_\theta\) in period \(k\).

The class without time is the subclass
\[
\mathcal F_{\mathrm{nt}}
:=
\left\{
f_\theta\in\mathcal F:
f_\theta(x,t)=f_\theta(x,t')
\text{ for all }x,\ t,\ t'
\right\}.
\]
That is, \(\mathcal F_{\mathrm{nt}}\) consists of predictors that do not depend explicitly on \(T\).

Let
\[
\theta_k^*\in\arg\min_{\theta\in\Theta}R_k(\theta)
\]
be a population risk minimizer in period \(k\).

A model trained on the 2020 data targets \(\theta_0^*\). Its out-of-time performance in a later period \(k>0\) is measured by \(R_k(\theta_0^*)\). Out-of-time deterioration occurs when the predictor that is optimal for 2020 is no longer optimal under the later-period distribution \(P_k\), so that
\[
R_k(\theta_0^*)-R_k(\theta_k^*)
\]
is large.

This is especially clear in the linear case. Suppose
\[
f_\theta(x,t)=\theta^\top \phi(x,t),
\]
where \(\phi(x,t)\) is a \(p\)-dimensional feature vector. If needed, the intercept is absorbed into \(\phi(x,t)\).

Define
\[
A_k := E_k\!\left[\phi(X,T)\phi(X,T)^\top\right],
\qquad
b_k := E_k\!\left[\phi(X,T)Y\right].
\]
Then
\begin{align*}
R_k(\theta)
&= E_k\!\left[\bigl(Y-\theta^\top\phi(X,T)\bigr)^2\right] \\
&= E_k[Y^2] - 2\theta^\top E_k[\phi(X,T)Y] + \theta^\top E_k[\phi(X,T)\phi(X,T)^\top]\theta \\
&= E_k[Y^2] - 2\theta^\top b_k + \theta^\top A_k \theta.
\end{align*}

If \(A_k\) is positive definite, then the minimizer is unique and is given by
\[
\theta_k^* = A_k^{-1} b_k.
\]
Moreover,
\begin{align*}
R_k(\theta) - R_k(\theta_k^*)
&=
\theta^\top A_k \theta - 2\theta^\top b_k + 2(\theta_k^*)^\top b_k - (\theta_k^*)^\top A_k \theta_k^* \\
&=
\theta^\top A_k \theta - 2\theta^\top A_k\theta_k^* + (\theta_k^*)^\top A_k \theta_k^* \\
&=
(\theta-\theta_k^*)^\top A_k (\theta-\theta_k^*).
\end{align*}
In particular,
\[
R_k(\theta_0^*) - R_k(\theta_k^*)
=
(\theta_0^*-\theta_k^*)^\top A_k (\theta_0^*-\theta_k^*).
\]

This identity shows that, for linear predictors, the excess out-of-time risk is exactly the discrepancy between the 2020-optimal parameter \(\theta_0^*\) and the period-\(k\)-optimal parameter \(\theta_k^*\), measured in the geometry induced by the later-period second-moment matrix \(A_k\).

The key point is therefore this. Even though the retrieval map \(f^*\) itself is fixed, the best approximation to \(f^*\) within a restricted predictor class may still change over time, because the input distribution changes over time.

\subsection{A linear model without time}

We now specialize the general discussion in Section~A.1 to a linear predictor
that uses only the original emulator features and does not include time.

Let \(\phi(x)\in\mathbb{R}^p\) denote the feature vector excluding time. For
\(\theta\in\mathbb{R}^p\), write
\[
f_\theta(x)=\theta^\top\phi(x),
\qquad
\mathcal F_{\rm lin}
=
\{f_\theta:\theta\in\mathbb{R}^p\}.
\]
In this subsection,
\[
R_0(\theta)
=
E_0\left[
\left\{Y-\theta^\top\phi(X)\right\}^2
\right].
\]
Let
\[
\theta_0^* \in \arg\min_{\theta\in\mathbb{R}^p} R_0(\theta)
\]
be a period-0 risk minimizer, and write
\[
f_0(X):=(\theta_0^*)^\top \phi(X).
\]
Then \(f_0\) is a population-optimal linear predictor in period 0 within
\(\mathcal F_{\rm lin}\).

Define the residual
\[
U:=Y-f_0(X).
\]
Since \(Y=f^*(X)\), the residual \(U=Y-f_0(X)\) is a function of \(X\) alone once \(f_0\) is fixed.
Thus \(U\) itself does not depend on \(k\). Moreover, since \(\theta_0^*\) minimizes \(R_0\), it satisfies the normal equation
\[
E_0\!\left[\phi(X)\bigl(Y-(\theta_0^*)^\top\phi(X)\bigr)\right]=0,
\]
that is,
\[
E_0[\phi(X)U]=0.
\]
However, \(E_k[U\mid T]\) may depend on \(k\), because the conditional law of \(X\) given \(T\)
may differ across periods.

Accordingly, for each period \(k\), define
\[
m_k(T):=E_k[U\mid T],\qquad \xi_k:=U-m_k(T),
\]
where the conditional expectation is taken under \(P_k\). Then
\[
E_k[\xi_k \mid T] = 0, \qquad U = m_k(T) + \xi_k.
\]

\begin{proposition}\label{prop:no_time_decomposition}
For each period $k$,
\[
E_k\!\left[\bigl(Y-f_0(X)\bigr)^2\right]
=
E_k\!\left[m_k(T)^2\right]
+
E_k[\xi_k^2].
\]
\end{proposition}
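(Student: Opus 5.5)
The plan is the usual bias--variance (orthogonal) decomposition, conditioning on \(T\). Since \(Y=f^*(X)\), the definition \(U:=Y-f_0(X)\) gives \(\bigl(Y-f_0(X)\bigr)^2=U^2\). Substituting the decomposition \(U=m_k(T)+\xi_k\) and expanding the square gives
\[
E_k\!\left[\bigl(Y-f_0(X)\bigr)^2\right]
=
E_k\!\left[m_k(T)^2\right]
+2E_k\!\left[m_k(T)\,\xi_k\right]
+E_k[\xi_k^2].
\]
It therefore suffices to show that the cross term vanishes.

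For this I will use the tower property under \(P_k\). Since \(m_k(T)\) is \(\sigma(T)\)-measurable, it can be pulled out of the inner conditional expectation:
\[
E_k[m_k(T)\xi_k]=E_k\!\bigl[m_k(T)\,E_k[\xi_k\mid T]\bigr].
\]
By construction of \(\xi_k=U-m_k(T)\) with \(m_k(T)=E_k[U\mid T]\), we have \(E_k[\xi_k\mid T]=0\). Hence the cross term is zero, and the stated identity follows.

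The main obstacle is not algebraic but concerns integrability. The expansion and the pull-out step require \(E_k[U^2]<\infty\), which is implicit in \(R_k\) being finite. Given this, \(m_k(T)\in L^2(P_k)\) by the conditional Jensen inequality, \(\xi_k\in L^2(P_k)\) as a difference of square-integrable variables, and the product \(m_k(T)\xi_k\) is integrable by Cauchy--Schwarz, so every expectation above is well defined. I will state this finiteness assumption explicitly.

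I will also remark that the normal equation \(E_0[\phi(X)U]=0\) is not needed for this identity. The identity holds for every period \(k\), including \(k>0\), because the orthogonality used is with respect to \(\sigma(T)\) under \(P_k\). It is not the period-0 orthogonality with respect to the features.
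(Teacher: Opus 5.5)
Your proposal is correct and matches the paper's proof: substitute $U=m_k(T)+\xi_k$, expand the square, and remove the cross term using the tower property together with $E_k[\xi_k\mid T]=0$. Your two added remarks are also accurate: the square-integrability assumption $E_k[U^2]<\infty$, which the paper leaves implicit, and the observation that the period-0 normal equation plays no role in this identity.
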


\begin{proof}
Since
\[
Y-f_0(X)=U=m_k(T)+\xi_k,
\]
we have
\[
E_k\!\left[\bigl(Y-f_0(X)\bigr)^2\right]
=
E_k\!\left[\bigl(m_k(T)+\xi_k\bigr)^2\right].
\]
Expanding the square gives
\[
E_k[m_k(T)^2]
+
2E_k[m_k(T)\xi_k]
+
E_k[\xi_k^2].
\]
The cross term vanishes because
\[
E_k[m_k(T)\xi_k]
=
E_k\!\left[m_k(T)\,E_k[\xi_k\mid T]\right]
=
0.
\]
This proves the claim.
\end{proof}

Proposition~\ref{prop:no_time_decomposition} shows that the later-period error of the period-0 linear predictor splits into two parts: the systematic time-dependent component $m_k(T)$, and the remaining variation $\xi_k$. Thus, out-of-time deterioration arises when the residual left by the period-0 predictor has a substantial time-dependent mean.

\subsection{Adding a linear time term}

We now enlarge the linear predictor class by adding a linear time feature.

Let
\[
\widetilde T := T - E_0[T]
\]
be the centered time variable. We consider the augmented class
\[
\widetilde{\mathcal F}_{\mathrm{lin}}
:=
\left\{
g_{\theta,\beta} : g_{\theta,\beta}(X,T)=\theta^\top\phi(X)+\beta \widetilde T,\
\theta\in\mathbb{R}^p,\ \beta\in\mathbb{R}
\right\}.
\]
Its period-0 population risk is
\[
\widetilde R_0(\theta,\beta)
:=
E_0\!\left[\bigl(Y-\theta^\top\phi(X)-\beta \widetilde T\bigr)^2\right].
\]

For simplicity, we assume that the centered time feature is orthogonal, under \(P_0\), to the original feature map:
\[
E_0[\phi(X)\widetilde T]=0.
\]
This is a simplifying assumption used only for interpretation; it is not asserted as a realistic property of the present data. Under this assumption, the added linear time term does not alter the period-0 projection onto the original feature space; it only captures the residual component aligned with time. If the orthogonality condition fails, the coefficient vector on the original features will in general also change after time is added.

Let
\[
(\widetilde\theta_0^*,\beta_0^*)
\in
\arg\min_{\theta\in\mathbb{R}^p,\ \beta\in\mathbb{R}}
\widetilde R_0(\theta,\beta)
\]
be a period-0 minimizer in the augmented class.

\begin{proposition}\label{prop:time_augmented_decomposition}
Assume that
\[
\Sigma_0:=E_0[\phi(X)\phi(X)^\top]
\]
is positive definite,
\[
E_0[\widetilde T^2]>0,
\]
and
\[
E_0[\phi(X)\widetilde T]=0.
\]
Then
\begin{align*}
\widetilde\theta_0^* &= \theta_0^*, \\
\beta_0^*
&=
\frac{E_0[Y\widetilde T]}{E_0[\widetilde T^2]}
=
\frac{E_0\!\left[\{Y-f_0(X)\}\widetilde T\right]}
     {E_0[\widetilde T^2]} .
\end{align*}
Moreover, for each period $k$,
\[
E_k\!\left[\bigl(Y-g_{\widetilde\theta_0^*,\beta_0^*}(X,T)\bigr)^2\right]
=
E_k\!\left[\bigl(m_k(T)-\beta_0^*\widetilde T\bigr)^2\right]
+
E_k[\xi_k^2].
\]
Consequently, if
\[
E_k\!\left[\bigl(m_k(T)-\beta_0^*\widetilde T\bigr)^2\right]
<
E_k\!\left[m_k(T)^2\right],
\]
then the period-0 time-augmented predictor
\(g_{\widetilde\theta_0^*,\beta_0^*}\) has smaller period-\(k\) risk
than the period-0 predictor without time, \(f_0\).
This comparison is between two predictors both fitted under \(P_0\). It does
not assert that \(g_{\widetilde\theta_0^*,\beta_0^*}\) has smaller risk than a
predictor that is re-optimized within the class without time under \(P_k\).
\end{proposition}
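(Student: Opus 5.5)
The plan has three parts. First, identify the period-0 minimizer of the augmented risk through its normal equations. Second, plug it into the period-\(k\) risk and reuse the decomposition from Proposition~\ref{prop:no_time_decomposition}. Third, read off the comparison.

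For the first part, \(\widetilde R_0\) is a convex quadratic in \((\theta,\beta)\), as in Section~A.1, with feature vector \((\phi(X),\widetilde T)\). Its second-moment matrix is block diagonal, with blocks \(\Sigma_0\) and \(E_0[\widetilde T^2]\), because the off-diagonal block \(E_0[\phi(X)\widetilde T]\) vanishes by assumption. Both blocks are positive definite, so the minimizer is unique and the normal equations decouple.

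\begin{itemize}
\item The \(\theta\)-equation reads \(\Sigma_0\theta = E_0[\phi(X)Y]\). This is exactly the normal equation defining \(\theta_0^*\) without time, and \(\theta_0^*\) is unique since \(\Sigma_0\) is positive definite. Hence \(\widetilde\theta_0^*=\theta_0^*\).
\item The \(\beta\)-equation gives \(\beta_0^* = E_0[Y\widetilde T]/E_0[\widetilde T^2]\).
\item For the second expression of \(\beta_0^*\), write \(E_0[Y\widetilde T]=E_0[U\widetilde T]+E_0[f_0(X)\widetilde T]\). The last term equals \((\theta_0^*)^\top E_0[\phi(X)\widetilde T]=0\).
\end{itemize}

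For the second part, substitute \(\widetilde\theta_0^*=\theta_0^*\) into the augmented predictor. This gives
\[
Y-g_{\widetilde\theta_0^*,\beta_0^*}(X,T)=U-\beta_0^*\widetilde T=\bigl(m_k(T)-\beta_0^*\widetilde T\bigr)+\xi_k .
\]
Now argue as in the proof of Proposition~\ref{prop:no_time_decomposition}. The first bracket is a function of \(T\) alone, because \(\widetilde T=T-E_0[T]\) subtracts a deterministic constant. Since \(E_k[\xi_k\mid T]=0\), the cross term \(E_k[(m_k(T)-\beta_0^*\widetilde T)\xi_k]\) vanishes by the tower property, and expanding the square gives the stated identity.

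For the final claim, compare this identity with Proposition~\ref{prop:no_time_decomposition}. Both risks share the term \(E_k[\xi_k^2]\), so the assumed strict inequality between the time-dependent terms transfers directly to the two period-\(k\) risks.

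I expect no step to be a real obstacle. The points needing care are:
\begin{itemize}
\item invoking uniqueness via positive definiteness, so that \(\widetilde\theta_0^*\) equals \(\theta_0^*\) and is not merely one minimizer among several;
\item noting that \(\widetilde T\) is \(\sigma(T)\)-measurable under every \(P_k\), which is what makes the cross term vanish;
\item assuming finite second moments of \(Y\), \(\phi(X)\) and \(T\), so that all expectations and the conditional expectations \(m_k(T)\) are well defined.
\end{itemize}
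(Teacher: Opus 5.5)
Your proposal is correct and follows essentially the same route as the paper. The paper expands $\widetilde R_0$ and uses $E_0[\phi(X)\widetilde T]=0$ to split it into $R_0(\theta)$ plus a separate quadratic in $\beta$, which is the same observation as your block-diagonal second-moment matrix and decoupled normal equations; the remaining steps (the identity $E_0[f_0(X)\widetilde T]=0$, the decomposition $U-\beta_0^*\widetilde T=(m_k(T)-\beta_0^*\widetilde T)+\xi_k$ with the cross term removed by $E_k[\xi_k\mid T]=0$, and subtraction of Proposition~\ref{prop:no_time_decomposition}) match the paper's proof, and your requirement that $E_k[\xi_k^2]$ be finite for the strict inequality to carry over is a point the paper leaves implicit.
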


\begin{proof}
First expand the period-0 risk:
\begin{align*}
\widetilde R_0(\theta,\beta)
&=
E_0[Y^2]
-2\theta^\top E_0[\phi(X)Y]
+\theta^\top E_0[\phi(X)\phi(X)^\top]\theta \\
&\quad
-2\beta E_0[Y\widetilde T]
+2\beta \theta^\top E_0[\phi(X)\widetilde T]
+\beta^2 E_0[\widetilde T^2].
\end{align*}
By the orthogonality assumption $E_0[\phi(X)\widetilde T]=0$, this becomes
\[
\widetilde R_0(\theta,\beta)
=
R_0(\theta)
-2\beta E_0[Y\widetilde T]
+\beta^2 E_0[\widetilde T^2].
\]
Thus the minimization over $(\theta,\beta)$ separates. Since $\Sigma_0$ is positive definite, the minimizer in $\theta$ is unique and equals $\theta_0^*$. Also, since $E_0[\widetilde T^2]>0$, the minimizer in $\beta$ is unique and given by
\[
\beta_0^*
=
\frac{E_0[Y\widetilde T]}{E_0[\widetilde T^2]}.
\]
Because $f_0(X)=(\theta_0^*)^\top\phi(X)$ and $E_0[\phi(X)\widetilde T]=0$,
\[
E_0[f_0(X)\widetilde T]
=
(\theta_0^*)^\top E_0[\phi(X)\widetilde T]
=
0.
\]
Hence
\[
E_0[Y\widetilde T]
=
E_0\!\left[\bigl(Y-f_0(X)\bigr)\widetilde T\right],
\]
which proves the formula for $\beta_0^*$.

Now fix $k$. Since $\widetilde\theta_0^*=\theta_0^*$,
\[
Y-g_{\widetilde\theta_0^*,\beta_0^*}(X,T)
=
Y-f_0(X)-\beta_0^*\widetilde T
=
U-\beta_0^*\widetilde T.
\]
Using $U=m_k(T)+\xi_k$, we get
\[
Y-g_{\widetilde\theta_0^*,\beta_0^*}(X,T)
=
\bigl(m_k(T)-\beta_0^*\widetilde T\bigr)+\xi_k.
\]
Therefore,
\begin{align*}
E_k\!\left[\bigl(Y-g_{\widetilde\theta_0^*,\beta_0^*}(X,T)\bigr)^2\right]
&=
E_k\!\left[\bigl(m_k(T)-\beta_0^*\widetilde T\bigr)^2\right] \\
&\quad
+2E_k\!\left[\bigl(m_k(T)-\beta_0^*\widetilde T\bigr)\xi_k\right]
+E_k[\xi_k^2].
\end{align*}
The cross term vanishes because $m_k(T)-\beta_0^*\widetilde T$ is measurable with respect to $T$ and $E_k[\xi_k\mid T]=0$. Hence
\[
E_k\!\left[\bigl(Y-g_{\widetilde\theta_0^*,\beta_0^*}(X,T)\bigr)^2\right]
=
E_k\!\left[\bigl(m_k(T)-\beta_0^*\widetilde T\bigr)^2\right]
+
E_k[\xi_k^2].
\]
The final comparison follows by subtracting the identity in Proposition~\ref{prop:no_time_decomposition}.
\end{proof}

The condition in Proposition~\ref{prop:time_augmented_decomposition} has a
simple meaning. It says that the fitted linear time correction
\(\beta_0^*\widetilde T\) is closer to the later-period mean residual component
\(m_k(T)\) than the zero correction is, in \(L_2(P_k)\). This holds, for
example, when \(m_k(T)\) is approximately linear in \(\widetilde T\) with slope
close to \(\beta_0^*\).

Thus, Proposition~\ref{prop:time_augmented_decomposition} gives a simple
population-level explanation of why adding time can improve out-of-time
prediction. Under the orthogonality assumption, the time feature does not alter
the period-0 linear fit in the original features; it only adds a linear
correction for the residual component aligned with time. If the orthogonality
assumption does not hold, this clean separation generally breaks down: adding
time also changes the coefficients on the original features, so the result
should be interpreted as a joint re-projection rather than as a pure residual
correction.

\section{Configurations of Machine-Learning Models}
\label{app:ML-models}

The processed 2020 dataset contains 87013 land observations and 38673 ocean observations. Separate models were trained for each target gas (XCO$_2$ and XCH$_4$), for each surface type (land and ocean), and with and without the normalized time feature.

For each run, the 2020 data were randomly split into a training subset (80\%) and an in-period test subset (20\%). Hyperparameter selection and model fitting were based only on the training subset. The held-out 2020 test subset was used only for in-period evaluation. The 2021--2023 data were used only for out-of-time evaluation.

Each input variable listed in \autoref{tab:feature_descriptions} was standardized using the mean and standard deviation computed from the corresponding 2020 training subset. The same standardization parameters were then applied to the held-out 2020 test subset and to the 2021--2023 out-of-time data. Thus, no information from the evaluation data was used in preprocessing or model fitting.

The following sections describe, for each learning method, the training procedure, hyperparameter selection, and final model configuration.

\subsection{Lasso Regression}
\label{app:Lasso}

The Lasso regularization parameter $\lambda$ \cite{tibshirani1996regression} was selected separately for each target gas and surface type using only the corresponding 2020 training subset. The search was carried out in two stages.

In the first stage, 199 candidate values of $\lambda$ were sampled on a logarithmic scale over $[10^{-6},\,10^{2}]$, and $\lambda=0$ was added, giving 200 candidates in total. Each candidate was evaluated by three-fold cross-validation using the \textit{scikit-learn} implementation \cite{pedregosa2011scikit}, and the corresponding cross-validated mean squared error (CV-MSE) was recorded.

In all cases, the CV-MSE was minimized at values of $\lambda$ very close to zero, such as $\lambda=10^{-6}$, and differed only negligibly from the CV-MSE at slightly larger values such as $\lambda=10^{-3}$.

In the second stage, Bayesian optimization based on Gaussian processes, implemented in \textit{scikit-optimize}, was carried out over the continuous interval $\lambda \in [0,\,15]$. For all target-gas and surface-type combinations, this procedure again selected values of $\lambda$ close to zero. This confirmed that little regularization was needed to minimize the cross-validated prediction error.

In the final models, however, we used slightly larger values of $\lambda$ in order to obtain sparser solutions and hence greater interpretability, while keeping predictive performance essentially unchanged. The resulting choices of $\lambda$ differed by target gas and surface type.

\paragraph{Land.}
For land observations, we used $\lambda = 1\times10^{-3}$ for XCO$_2$ in both the model without time and the time-augmented model. For XCH$_4$, we used $\lambda = 5\times10^{-5}$ without time and $\lambda = 1\times10^{-5}$ with time.

Without the time feature, the Lasso was fitted with 4,323 candidate features and retained 506 nonzero coefficients for XCO$_2$ (about 11.7\%) and 821 for XCH$_4$ (about 19.0\%). With the time feature, the number of candidate features increased to 4,324. The number of retained coefficients then changed only slightly for XCO$_2$, from 506 to 508, but decreased markedly for XCH$_4$, from 821 to 455 (about 10.5\%).

Thus, for land-based XCH$_4$ prediction, adding time led to a substantially sparser solution. This suggests that part of the variation that would otherwise be represented by a larger set of features can instead be captured by the time predictor.

\paragraph{Ocean.}
For ocean observations, we used $\lambda = 1\times10^{-3}$ for XCO$_2$ and $\lambda = 1\times10^{-5}$ for XCH$_4$, both without and with the time feature.

Without the time feature, the Lasso retained 944 nonzero coefficients for XCO$_2$ (about 21.8\%) and 801 for XCH$_4$ (about 18.5\%). With the time feature, the corresponding numbers were 943 for XCO$_2$ and 800 for XCH$_4$, again about 21.8\% and 18.5\% of the candidate features.

Thus, over ocean, adding time had essentially no effect on the sparsity pattern selected by the Lasso.

For each target gas, surface type, and time specification, the features selected by the corresponding Lasso model were also used to define the reduced input set for the k-NN model; see \autoref{app:knn}.

\paragraph{Lasso Models Retrained for TCCON Validation.}
For the main experiments, the Lasso models were trained on random 80\% subsets of the 2020 data. For the TCCON validation only, they were retrained on the full 2020 data for the corresponding surface category, while keeping the regularization parameter $\lambda$ fixed at the value selected in the main training procedure.

Most TCCON stations are located over land, but the collocated satellite footprints also include observations labelled as ``other'', that is, surfaces not classified as pure land or pure ocean. Because the number of ocean collocations was too small for reliable interpretation, the retrained models for the TCCON validation were constructed only for the land (87013 observations) and other (12624 observations) surface categories.

These models were fitted without a 20\% hold-out split, so that all available 2020 observations in the corresponding surface category could be used for training. The regularization parameter $\lambda$ was not re-optimized. Instead, we reused the values selected in the main training procedure. This kept the TCCON-validation models consistent with the main analysis and avoided retuning $\lambda$ on the full 2020 data. For each target gas and each surface category, we fitted both the model without time and the time-augmented model.

The resulting sparse solutions, based on 4,323 candidate features without time and 4,324 candidate features with time, are summarized in \autoref{tab:tccon_lasso_sparsity}.

\begin{table}[htbp]
\centering
\caption{Lasso regularization and sparsity for TCCON validation.
Regularization parameters and sparsity levels are reported for Lasso models
retrained on the full 2020 data, by surface category, target gas, and model
variant. The last column gives the percentage of nonzero coefficients among all
input coefficients.}
\label{tab:tccon_lasso_sparsity}
\renewcommand{\arraystretch}{1.15}
\setlength{\tabcolsep}{5pt}

\begin{tabular}{l l l c r c}
\toprule
\textbf{Surface} & \textbf{Gas} & \textbf{Variant}
& $\lambda$ & \textbf{Nonzero} & \textbf{Nonzero/total (\%)} \\
\midrule
Land  & XCO$_2$ & Without time & $1.0\times10^{-3}$ &  506 & 11.7 \\
Land  & XCO$_2$ & With time    & $1.0\times10^{-3}$ &  508 & 11.7 \\
Land  & XCH$_4$ & Without time & $5.0\times10^{-5}$ &  821 & 19.0 \\
Land  & XCH$_4$ & With time    & $1.0\times10^{-5}$ &  455 & 10.5 \\
Other & XCO$_2$ & Without time & $1.0\times10^{-3}$ & 1064 & 24.6 \\
Other & XCO$_2$ & With time    & $1.0\times10^{-3}$ & 1056 & 24.4 \\
Other & XCH$_4$ & Without time & $1.0\times10^{-5}$ &  854 & 19.7 \\
Other & XCH$_4$ & With time    & $1.0\times10^{-5}$ &  864 & 20.0 \\
\bottomrule
\end{tabular}
\end{table}

\subsection{Neural-Net Architectures}
\label{app:NN}

We implemented fully connected feedforward neural networks \cite{goodfellow2016deep} using the \textit{Keras} API in \textit{TensorFlow}, in order to capture nonlinear relations among the satellite-derived predictors. For each run, the 2020 data were first split into a training subset (80\%) and an in-period test subset (20\%). The training subset was then split again in a 3:1 ratio, giving 64\% of the full 2020 data for training and 16\% for validation. The validation subset was used only to monitor predictive performance and to implement early stopping. All models were trained with the \textit{Adam} optimizer \cite{kingma2017} and mean squared error loss.

Initial hyperparameter tuning was carried out by randomized search. In this initial search, the candidate values were the learning rate $\eta \in \{10^{-4},\,10^{-3},\,10^{-2},\,0.1\}$, hidden-layer width $d \in \{32,64,128,256\}$, number of hidden layers $L \in \{1,2,3\}$, dropout rate $p \in \{0.1,0.2,0.3,0.4\}$, and batch size $B \in \{16,32,64,128,256\}$.

Because training on these large datasets was computationally expensive, the search was limited to 20 sampled configurations, each evaluated by three-fold cross-validation. This stage was used mainly to identify reasonable starting architectures for the final manual refinement.

The final architectures were then obtained by limited manual refinement of these initial configurations. In particular, for some settings we adjusted the depth and width of the network when this improved validation performance. For land-based XCH$_4$, this led to the use of a four-hidden-layer architecture. Dropout was not retained in the final models, because it did not improve validation performance. In the final specification, early stopping was used as the sole regularization device. Training was run for at most 400 epochs with batch size 32, and the weights attaining the lowest validation loss were retained, with a patience parameter of 50 epochs. Rectified linear unit activations \cite{nair2010rectified} were used in all hidden layers, followed by a single linear output unit.

\paragraph{Land.}
For land-based XCO$_2$, we used a network with three hidden layers of sizes 64, 32, and 16, followed by a single linear output unit. For land-based XCH$_4$, we used a deeper network with four hidden layers of sizes 64, 32, 16, and 8, again followed by a single linear output unit. This deeper architecture was selected during the final manual refinement stage because it gave better validation performance.

\paragraph{Ocean.}
For ocean-based XCO$_2$, we used a network with three hidden layers of sizes 128, 32, and 32, followed by a single linear output unit. For ocean-based XCH$_4$, we used a wider network with three hidden layers of sizes 256, 64, and 16, again followed by a single linear output unit.

Thus, separate neural-network architectures were used for each target gas and surface type. The final architectures, including the corresponding time-augmented versions, are summarized in \autoref{tab:nn_architectures}.

\begin{table}[htbp]
\centering
\renewcommand{\arraystretch}{1.3}
\setlength{\tabcolsep}{4.2pt}
\caption{Final neural-network architectures. Architectures are shown by surface type, target gas, and model variant. All models were trained using the Adam optimizer.}
\label{tab:nn_architectures}
\begin{tabular}{lllccc}
\hline
\textbf{Surface} & \textbf{Gas} & \textbf{Variant} & \textbf{Hidden layers} & \textbf{Layer sizes} & \textbf{Batch size} \\
\hline
Land  & XCO$_2$ & Without time & 3 & 64 $\rightarrow$ 32 $\rightarrow$ 16 & 32 \\
Land  & XCO$_2$ & With time    & 3 & 64 $\rightarrow$ 32 $\rightarrow$ 16 & 32 \\
Land  & XCH$_4$ & Without time & 4 & 64 $\rightarrow$ 32 $\rightarrow$ 16 $\rightarrow$ 8 & 32 \\
Land  & XCH$_4$ & With time    & 4 & 64 $\rightarrow$ 32 $\rightarrow$ 16 $\rightarrow$ 8 & 32 \\
\hline
Ocean & XCO$_2$ & Without time & 3 & 128 $\rightarrow$ 32 $\rightarrow$ 32 & 32 \\
Ocean & XCO$_2$ & With time    & 3 & 128 $\rightarrow$ 32 $\rightarrow$ 32 & 32 \\
Ocean & XCH$_4$ & Without time & 3 & 256 $\rightarrow$ 64 $\rightarrow$ 16 & 32 \\
Ocean & XCH$_4$ & With time    & 3 & 256 $\rightarrow$ 64 $\rightarrow$ 16 & 32 \\
\hline
\end{tabular}
\end{table}

\subsection{$k$-NN Regression}
\label{app:knn}

We implemented $k$-nearest neighbours ($k$-NN) regression, following the approach of Kanagawa~\cite{kanagawa2024fast}. Because $k$-NN is sensitive to the curse of dimensionality, we reduced the input dimension and trained each model using only the 20 features selected by the corresponding Lasso model.

During model selection, the number of neighbours $k$ was varied from 5 to 500 in steps of 3, and four distance metrics were considered: Manhattan, Euclidean, Chebyshev, and cosine. For each combination of $k$ and distance metric, we computed the leave-one-out cross-validation (LOOCV) score. In general, smaller values of $k$ gave better LOOCV performance, and no intermediate value was consistently preferred.

To assess the practical effect of neighbourhood size, we also compared models with $k=3$ and $k=400$ on the out-of-time data from 2021 to 2023. The model with $k=3$ consistently outperformed the model with $k=400$, which supported the LOOCV-based preference for smaller values of $k$. The final configurations, including the corresponding time-augmented variants, are summarized in \autoref{tab:knn_hyperparameters}.

\begin{table}[htbp]
\centering
\renewcommand{\arraystretch}{1.3}
\setlength{\tabcolsep}{4.5pt}
\caption{Final $k$-NN hyperparameter settings. Settings are shown by surface type, target gas, and model variant. For all models, the input space was restricted to the 20 features selected by the corresponding Lasso model.}
\label{tab:knn_hyperparameters}
\begin{tabular}{lllcc}
\hline
\textbf{Surface} & \textbf{Gas} & \textbf{Variant} & \textbf{Neighbours ($k$)} & \textbf{Distance metric} \\
\hline
Land  & XCO$_2$ & Without time &  8 & Manhattan \\
Land  & XCO$_2$ & With time    &  8 & Manhattan \\
Land  & XCH$_4$ & Without time &  3 & Manhattan \\
Land  & XCH$_4$ & With time    &  3 & Manhattan \\
\hline
Ocean & XCO$_2$ & Without time & 17 & Cosine \\
Ocean & XCO$_2$ & With time    & 17 & Cosine \\
Ocean & XCH$_4$ & Without time & 11 & Cosine \\
Ocean & XCH$_4$ & With time    & 11 & Cosine \\
\hline
\end{tabular}
\end{table}

\paragraph{Land.}
For land observations, the Manhattan distance gave the lowest LOOCV error for both target gases. The selected numbers of neighbours were $k=8$ for XCO$_2$ and $k=3$ for XCH$_4$. This preference for small neighbourhoods suggests that accurate prediction over land depends on finding closely similar observations in the reduced feature space.

\paragraph{Ocean.}
For ocean observations, the cosine distance gave the lowest LOOCV error for both target gases. The selected numbers of neighbours were larger than those over land, namely $k=17$ for XCO$_2$ and $k=11$ for XCH$_4$. This suggests that, over ocean, prediction benefits from averaging over a broader neighbourhood in the reduced feature space.

\subsection{XGBoost Regression}
\label{app:xgb}

We implemented gradient-boosted decision trees using the \textit{XGBoost} library \cite{chen2016xgboost}. Unlike the $k$-NN models, the XGBoost models were trained on the full feature set. Hyperparameters were tuned by randomized search, using 150 sampled configurations for the land models and 250 for the ocean models. Each sampled configuration was evaluated by three-fold cross-validation with GPU-accelerated histogram-based trees.

For the XGBoost models, the hyperparameter search spaces were defined separately for CO$_2$ and CH$_4$.

For XCO$_2$, the search space comprised the following hyperparameters:
\begin{itemize}
    \item \texttt{max\_depth} \(\in \{3,6,9,15,20,25,35,45,55,70\}\), which controls the maximum depth of each tree;
    \item \texttt{min\_child\_weight} \(\in \{5,10,15,20,25,30,50,70,90\}\), which controls the minimum summed instance weight required in a child node;
    \item \texttt{subsample} \(\in \{0.1,0.25,0.4,0.55,0.65,0.8,0.9,1\}\), which controls the fraction of training instances sampled for each tree;
    \item \texttt{colsample\_bytree} \(\in \{0.2,0.3,0.45,0.6,0.7,0.8,0.9,1\}\), which controls the fraction of features sampled for each tree;
    \item \texttt{eta} \(\in \{10^{-5},10^{-4},10^{-3},0.05,0.1,0.25\}\), which controls the learning rate;
    \item \texttt{n\_estimators} \(\in \{50,100,150,200,250,300,500,600\}\), which controls the number of boosting rounds;
    \item \texttt{gamma} \(\in \{1,3,7,9,12,15,17,25,40,50,65\}\), which controls the minimum loss reduction required for a further split;
    \item \texttt{alpha} \(\in \{0.1,0.5,2,5,10,13,17,25,35,50\}\), which is the L1 regularization coefficient;
    \item \texttt{lambda} \(\in \{0.2,0.5,0.9,4,6,10,15,20,30,45\}\), which is the L2 regularization coefficient.
\end{itemize}

For XCH$_4$, the search space comprised the following hyperparameters:
\begin{itemize}
    \item \texttt{max\_depth} \(\in \{3,6,9,15,20,25,35,45,55,70\}\), which controls the maximum depth of each tree;
    \item \texttt{min\_child\_weight} \(\in \{5,10,15,20,25,30,50,70,90\}\), which controls the minimum summed instance weight required in a child node;
    \item \texttt{subsample} \(\in \{0.1,0.25,0.35,0.5,0.7\}\), which controls the fraction of training instances sampled for each tree;
    \item \texttt{colsample\_bytree} \(\in \{0.2,0.3,0.45,0.6,0.8\}\), which controls the fraction of features sampled for each tree;
    \item \texttt{eta} \(\in \{10^{-5},10^{-4},10^{-3},0.05,0.1,0.25\}\), which controls the learning rate;
    \item \texttt{n\_estimators} \(\in \{50,100,150,200,250,300,500,600\}\), which controls the number of boosting rounds;
    \item \texttt{gamma} \(\in \{1,3,7,9,12,15,17,25,40,50,65\}\), which controls the minimum loss reduction required for a further split;
    \item \texttt{alpha} \(\in \{0.1,0.5,2,5,10,13,17,25,35,50\}\), which is the L1 regularization coefficient;
    \item \texttt{lambda} \(\in \{0.2,0.5,0.9,4,6,10,15,20,30,45\}\), which is the L2 regularization coefficient.
\end{itemize}

\begin{table}[htbp]
\centering
\footnotesize
\renewcommand{\arraystretch}{1.3}
\setlength{\tabcolsep}{4.2pt}
\caption{Final XGBoost hyperparameter configurations. Configurations are shown by surface type, target gas, and model variant. Rounds, Depth, LR, Sub., Col., and MCW denote \texttt{n\_estimators}, \texttt{max\_depth}, \texttt{eta}, \texttt{subsample}, \texttt{colsample\_bytree}, and \texttt{min\_child\_weight}, respectively. The symbols $\gamma$, $\alpha$, and $\lambda$ denote \texttt{gamma}, \texttt{alpha}, and \texttt{lambda}, respectively.}
\label{tab:xgboost_hyperparameters}
\begin{tabular}{lllccccccccc}
\hline
\textbf{Surface} & \textbf{Gas} & \textbf{Variant} &
\textbf{Rounds} & \textbf{Depth} & \textbf{LR} &
\textbf{Sub.} & \textbf{Col.} & \textbf{MCW} &
$\boldsymbol{\gamma}$ & $\boldsymbol{\alpha}$ & $\boldsymbol{\lambda}$ \\
\hline
Land  & XCO$_2$ & Without time & 600 &  9 & 0.05 & 0.40 & 0.70 & 30 & 1.0 &  5.0 & 20.0 \\
Land  & XCO$_2$ & With time    & 600 &  9 & 0.05 & 0.40 & 0.70 & 30 & 1.0 &  5.0 & 20.0 \\
Land  & XCH$_4$ & Without time & 100 & 20 & 0.05 & 0.35 & 0.45 & 30 & 1.0 &  5.0 &  0.9 \\
Land  & XCH$_4$ & With time    & 100 & 20 & 0.05 & 0.35 & 0.45 & 30 & 1.0 &  5.0 &  0.9 \\
\hline
Ocean & XCO$_2$ & Without time & 600 &  9 & 0.05 & 0.90 & 1.00 & 25 & 3.0 & 17.0 &  0.9 \\
Ocean & XCO$_2$ & With time    & 600 &  9 & 0.05 & 0.90 & 1.00 & 25 & 3.0 & 17.0 &  0.9 \\
Ocean & XCH$_4$ & Without time & 200 & 20 & 0.25 & 0.70 & 0.60 & 30 & 1.0 &  0.5 & 15.0 \\
Ocean & XCH$_4$ & With time    & 200 & 20 & 0.25 & 0.70 & 0.60 & 30 & 1.0 &  0.5 & 15.0 \\
\hline
\end{tabular}
\end{table}

\paragraph{Land.}
For land-based XCO$_2$, the selected model used trees of depth 9, 600 boosting rounds, and relatively strong L2 regularization ($\lambda = 20$). For land-based XCH$_4$, the selected model used deeper trees (maximum depth 20), stronger stochastic subsampling (\texttt{subsample} $= 0.35$), and 100 boosting rounds. Thus, the two land models relied on different mechanisms for controlling model complexity.

\paragraph{Ocean.}
For ocean-based XCO$_2$, the selected model used near-full sampling (\texttt{subsample} $= 0.90$, \texttt{colsample\_bytree} $= 1.00$), 600 boosting rounds, and relatively strong L1 regularization ($\alpha = 17$). For ocean-based XCH$_4$, the selected model used deeper trees, a higher learning rate ($\eta = 0.25$), and 200 boosting rounds. Thus, the two ocean models also required different hyperparameter regimes.

\section{Additional Tables}
\label{app:additional-tables}

This appendix collects supplementary tables referenced in the main text. Table~\ref{tab:tccon_sites} lists the TCCON sites used in the external validation. Tables~\ref{tab:co2_lasso_time_top40_land_ocean} and \ref{tab:ch4_lasso_time_top40_land_ocean} report the top 40 features selected by the time-augmented Lasso models for XCO$_2$ and XCH$_4$, respectively, over land and ocean. Tables~\ref{tab:OOT_land_xco2}--\ref{tab:OOT_ocean_xch4} give the full NRMSE results by gas, surface type, and year.

\begin{table}[htbp]
\centering
\caption{TCCON validation sites. Sites are listed with geographic coordinates and dataset references.}
\label{tab:tccon_sites}
\renewcommand{\arraystretch}{1.1}
\setlength{\tabcolsep}{5pt}
\begin{tabular}{l c c l}
\hline
\textbf{Site} & \textbf{Latitude} & \textbf{Longitude} & \textbf{Reference} \\
\hline
Bremen              & $53.10^\circ$ N  & $8.85^\circ$ E    & \cite{notholt_petri_warneke_buschmann_2022} \\
Burgos              & $18.533^\circ$ N & $120.650^\circ$ E & \cite{morino_velazco_hori_uchino_griffith_2022} \\
Caltech (Pasadena)  & $34.136^\circ$ N & $118.127^\circ$ W & \cite{wennberg_roehl_wunch_blavier_toon_allen_treffers_laughner_2022} \\
East Trout Lake     & $54.354^\circ$ N & $104.987^\circ$ W & \cite{wunch_mendonca_colebatch_allen_blavier_kunz_roche_hedelius_neufeld_springett_worthy_kessler_strong_2022} \\
Four Corners        & $36.707^\circ$ N & $108.480^\circ$ W & \cite{dubey_lindenmaier_henderson_allen_roehl_blavier_love_wunch_2022} \\
Indianapolis        & $39.861^\circ$ N & $86.004^\circ$ W  & \cite{iraci_podolske_hillyard_roehl_wennberg_blavier_landeros_allen_wunch_zavaleta_quigley_osterman_barrow_barney_2022} \\
JPL02               & $34.202^\circ$ N & $118.175^\circ$ W & \cite{wennberg_roehl_blavier_wunch_allen_2022} \\
Karlsruhe           & $49.100^\circ$ N & $8.439^\circ$ E   & \cite{hase_herkommer_gross_blumenstock_kiel_dohe_2024} \\
Lauder01            & $45.038^\circ$ S & $169.684^\circ$ E & \cite{sherlock_connor_robinson_shiona_smale_pollard_2022_lauder1} \\
Lauder02            & $45.038^\circ$ S & $169.684^\circ$ E & \cite{sherlock_connor_robinson_shiona_smale_pollard_2022_lauder2} \\
Lauder03            & $45.038^\circ$ S & $169.684^\circ$ E & \cite{pollard_robinson_shiona_2022} \\
Lamont              & $36.604^\circ$ N & $97.486^\circ$ W  & \cite{wennberg_wunch_roehl_blavier_toon_allen_2022} \\
Manaus              & $3.213^\circ$ S  & $60.598^\circ$ W  & \cite{dubey_henderson_allen_blavier_roehl_wunch_2022} \\
Nicosia             & $35.141^\circ$ N & $33.381^\circ$ E  & \cite{petri_vrekoussis_rousogenous_warneke_sciare_notholt_2024} \\
Orléans             & $47.970^\circ$ N & $2.113^\circ$ E   & \cite{warneke_petri_notholt_buschmann_2024} \\
Paris               & $48.846^\circ$ N & $2.356^\circ$ E   & \cite{te_jeseck_janssen_2022} \\
Park Falls          & $45.945^\circ$ N & $90.273^\circ$ W  & \cite{wennberg_roehl_wunch_toon_blavier_washenfelder_keppel-aleks_allen_2022} \\
Réunion Island      & $20.901^\circ$ S & $55.485^\circ$ E  & \cite{demaziere_sha_desmet_hermans_scolas_kumps_zhou_metzger_duflot_cammas_2022} \\
Rikubetsu           & $43.457^\circ$ N & $143.766^\circ$ E & \cite{morino_ohyama_hori_ikegami_2022_Rikubetsu} \\
Saga                & $33.241^\circ$ N & $130.288^\circ$ E & \cite{shiomi_kawakami_ohyama_arai_okumura_ikegami_usami_2022} \\
Sodankylä           & $67.367^\circ$ N & $26.631^\circ$ E  & \cite{kivi_heikkinen_kyro_2022} \\
Tsukuba             & $36.051^\circ$ N & $140.122^\circ$ E & \cite{morino_ohyama_hori_ikegami_2022_tsukuba} \\
Xianghe             & $39.750^\circ$ N & $116.960^\circ$ E & \cite{zhou_wang_kumps_hermans_nan_2022} \\
\hline
\end{tabular}
\end{table}

\begin{table}[htbp]
\caption{Top XCO$_2$ Lasso features. The table reports the top 40 features for the time-augmented Lasso model over land and ocean. Wavenumbers are given in cm$^{-1}$; non-spectral features are indicated by ``--''.}
\centering
\renewcommand{\arraystretch}{1.1}
\setlength{\tabcolsep}{4pt}
\begin{tabular}{cccc@{\hspace{1.5em}}cccc}
\toprule
\multicolumn{4}{c}{Land} & \multicolumn{4}{c}{Ocean} \\
\cmidrule(lr){1-4} \cmidrule(lr){5-8}
Rank & Variable & Coefficient & cm$^{-1}$ & Rank & Variable & Coefficient & cm$^{-1}$ \\
\midrule
1  & \texttt{xco2\_ap} &  1.1251 & --       & 1  & \texttt{ch1254} &  1.727903 & 6179.47  \\
2  & \texttt{ch3760}   &  1.0996 & 4799.61  & 2  & \texttt{ch1069} & -1.227399 & 13162.46 \\
3  & \texttt{ch3759}   &  0.9878 & 5198.60  & 3  & \texttt{SZs}    &  1.139208 & --       \\
4  & \texttt{ch314}    &  0.8987 & 13011.72 & 4  & \texttt{ch305}  &  1.021829 & 13009.93 \\
5  & \texttt{ch1254}   & -0.8889 & 6179.47  & 5  & \texttt{ch2062} & -0.942576 & 6340.63  \\
6  & \texttt{ch193}    &  0.7952 & 12987.59 & 6  & \texttt{ch1399} & -0.851444 & 6208.39  \\
7  & \texttt{ch1081}   & -0.7828 & 13164.85 & 7  & \texttt{ch1081} & -0.840253 & 13164.85 \\
8  & \texttt{ch304}    &  0.6616 & 13009.73 & 8  & \texttt{ch314}  &  0.798474 & 13011.72 \\
9  & \texttt{ch316}    & -0.6604 & 13012.12 & 9  & \texttt{ch1548} & -0.797852 & 6238.11  \\
10 & \texttt{ch2256}   & -0.6576 & 5899.26  & 10 & \texttt{ch3761} & -0.797470 & 4799.80  \\
11 & \texttt{ch3515}   & -0.6558 & 5150.65  & 11 & \texttt{ch2045} & -0.796562 & 6337.24  \\
12 & \texttt{ch3764}   &  0.5881 & 4800.40  & 12 & \texttt{ch1527} & -0.743983 & 6233.92  \\
13 & \texttt{ch589}    &  0.5427 & 13066.71 & 13 & \texttt{ch2142} & -0.741416 & 6356.59  \\
14 & \texttt{ch3589}   &  0.5190 & 5165.41  & 14 & \texttt{ch2054} & -0.730762 & 6339.03  \\
15 & \texttt{ch514}    & -0.5170 & 13051.75 & 15 & \texttt{ch1550} &  0.687708 & 6238.51  \\
16 & \texttt{ch501}    &  0.5114 & 13049.15 & 16 & \texttt{ch1402} &  0.636261 & 6208.99  \\
17 & \texttt{ch251}    &  0.5063 & 12999.16 & 17 & \texttt{ch1080} & -0.635616 & 13164.65 \\
18 & \texttt{ch3566}   & -0.4775 & 5160.82  & 18 & \texttt{ch1543} &  0.624887 & 6237.11  \\
19 & \texttt{ch1074}   & -0.4744 & 13163.46 & 19 & \texttt{ch2128} & -0.598341 & 6353.79  \\
20 & \texttt{ch76}     &  0.4655 & 12964.25 & 20 & \texttt{ch2064} &  0.582891 & 6341.03  \\
21 & \texttt{co2\_5}   &  0.4628 & --       & 21 & \texttt{ch608}  & -0.580630 & 13070.50 \\
22 & \texttt{ch1076}   & -0.4511 & 13163.86 & 22 & \texttt{ch46}   & -0.572837 & 12958.26 \\
23 & \texttt{ch358}    &  0.4449 & 13020.50 & 23 & \texttt{ch4165} & -0.562910 & 4880.38  \\
24 & \texttt{ch1080}   & -0.4319 & 13164.65 & 24 & \texttt{ch3903} & -0.562642 & 4828.12  \\
25 & \texttt{ch2996}   &  0.4301 & 6047.03  & 25 & \texttt{ch2144} &  0.561964 & 6356.99  \\
26 & \texttt{ch261}    & -0.4273 & 13001.15 & 26 & \texttt{ch1444} & -0.547034 & 6217.36  \\
27 & \texttt{ch1563}   &  0.4259 & 6241.10  & 27 & \texttt{ch2000} & -0.526930 & 6328.26  \\
28 & \texttt{T16}      & -0.4223 & --       & 28 & \texttt{ch852}  & -0.523511 & 13119.17 \\
29 & \texttt{ch546}    &  0.3984 & 13058.13 & 29 & \texttt{ch251}  &  0.506768 & 12999.16 \\
30 & \texttt{ch679}    &  0.3937 & 13084.66 & 30 & \texttt{ch731}  & -0.502339 & 13095.04 \\
31 & \texttt{ch819}    & -0.3906 & 13112.59 & 31 & \texttt{ch1541} & -0.496560 & 6236.71  \\
32 & \texttt{SZs}      & -0.3831 & --       & 32 & \texttt{ch1453} & -0.486578 & 6219.16  \\
33 & \texttt{co2\_4}   & -0.3701 & --       & 33 & \texttt{ch3824} & -0.482009 & 4812.37  \\
34 & \texttt{co2\_9}   &  0.3695 & --       & 34 & \texttt{ch366}  &  0.475613 & 13022.10 \\
35 & \texttt{ch3817}   &  0.3694 & 4810.97  & 35 & \texttt{ch2264} &  0.474914 & 5901.02  \\
36 & \texttt{ch2597}   & -0.3650 & 5967.44  & 36 & \texttt{ch194}  &  0.465847 & 12987.79 \\
37 & \texttt{ch1078}   & -0.3534 & 13164.26 & 37 & \texttt{ch4104} & -0.461703 & 4868.22  \\
38 & \texttt{t}        &  0.3526 & --       & 38 & \texttt{ch4220} & -0.456703 & 4891.35  \\
39 & \texttt{ch3517}   & -0.3525 & 5151.05  & 39 & \texttt{lat}    &  0.445869 & --       \\
40 & \texttt{ch257}    & -0.3497 & 13000.35 & 40 & \texttt{ch1561} & -0.444901 & 6240.70  \\
\bottomrule
\end{tabular}
\label{tab:co2_lasso_time_top40_land_ocean}
\end{table}

\begin{table}[htbp]
\caption{Top XCH$_4$ Lasso features. The table reports the top 40 features for the time-augmented Lasso model over land and ocean. Wavenumbers are given in cm$^{-1}$; non-spectral features are indicated by ``--''.}
\centering
\renewcommand{\arraystretch}{1.1}
\setlength{\tabcolsep}{4pt}
\begin{tabular}{cccc@{\hspace{1.5em}}cccc}
\toprule
\multicolumn{4}{c}{Land} & \multicolumn{4}{c}{Ocean} \\
\cmidrule(lr){1-4} \cmidrule(lr){5-8}
Rank & Variable & Coefficient & cm$^{-1}$ & Rank & Variable & Coefficient & cm$^{-1}$ \\
\midrule
1  & \texttt{ch3143}        & -0.012718 & 6076.35 & 1  & \texttt{ch3143}   & -0.024172 & 6076.35 \\
2  & \texttt{ch3191}        & -0.009218 & 6085.92 & 2  & \texttt{ch3093}   & -0.020122 & 6066.38 \\
3  & \texttt{xch4\_ap}      &  0.009119 & --      & 3  & \texttt{ch2504}   & -0.019994 & 5948.89 \\
4  & \texttt{ch3817}        &  0.008848 & 4810.97 & 4  & \texttt{ch3191}   & -0.019505 & 6085.92 \\
5  & \texttt{ch3093}        & -0.007855 & 6066.38 & 5  & \texttt{ch2765}   & -0.017586 & 6000.95 \\
6  & \texttt{dop\_v\_earth} & -0.007674 & --      & 6  & \texttt{ch2617}   & -0.016836 & 5971.43 \\
7  & \texttt{SAc}           & -0.007113 & --      & 7  & \texttt{ch2561}   & -0.016643 & 5960.26 \\
8  & \texttt{ch2330}        & -0.006993 & 5914.19 & 8  & \texttt{ch3193}   &  0.016369 & 6086.32 \\
9  & \texttt{ch4\_1}        &  0.006878 & --      & 9  & \texttt{ch2776}   & -0.015705 & 6003.15 \\
10 & \texttt{ch3242}        &  0.006875 & 6096.09 & 10 & \texttt{ch3240}   & -0.014668 & 6095.70 \\
11 & \texttt{ch3240}        & -0.005774 & 6095.70 & 11 & \texttt{ch3043}   & -0.014150 & 6056.40 \\
12 & \texttt{ch2389}        & -0.005677 & 5925.96 & 12 & \texttt{ch2389}   & -0.013831 & 5925.96 \\
13 & \texttt{ch2504}        & -0.005616 & 5948.89 & 13 & \texttt{ch3095}   &  0.012806 & 6066.77 \\
14 & \texttt{ch2760}        & -0.005548 & 5999.95 & 14 & \texttt{ch3192}   & -0.012570 & 6086.12 \\
15 & \texttt{ch3144}        &  0.005423 & 6076.55 & 15 & \texttt{ch2782}   &  0.012369 & 6004.34 \\
16 & \texttt{ch3043}        & -0.005290 & 6056.40 & 16 & \texttt{ch3242}   &  0.012280 & 6096.09 \\
17 & \texttt{ch3289}        & -0.005092 & 6105.47 & 17 & \texttt{ch2767}   &  0.012260 & 6001.35 \\
18 & \texttt{ch4012}        &  0.004931 & 4849.87 & 18 & \texttt{ch3144}   &  0.011167 & 6076.55 \\
19 & \texttt{ch2782}        &  0.004858 & 6004.34 & 19 & \texttt{ch3289}   & -0.011018 & 6105.47 \\
20 & \texttt{ch3145}        &  0.004532 & 6076.75 & 20 & \texttt{ch2760}   & -0.009698 & 5999.95 \\
21 & \texttt{ch2776}        & -0.004460 & 6003.15 & 21 & \texttt{ch2562}   &  0.009367 & 5960.46 \\
22 & \texttt{SAs}           &  0.004324 & --      & 22 & \texttt{ch2773}   & -0.008751 & 6002.55 \\
23 & \texttt{ch4032}        &  0.004168 & 4853.86 & 23 & \texttt{ch2762}   &  0.008566 & 6000.35 \\
24 & \texttt{ch2270}        & -0.004067 & 5902.22 & 24 & \texttt{xch4\_ap} &  0.008565 & --      \\
25 & \texttt{t}             &  0.003925 & --      & 25 & \texttt{ch3291}   &  0.008224 & 6105.87 \\
26 & \texttt{T1}            & -0.003809 & --      & 26 & \texttt{ch3145}   &  0.008180 & 6076.75 \\
27 & \texttt{ch2272}        & -0.003787 & 5902.62 & 27 & \texttt{ch2506}   &  0.007829 & 5949.29 \\
28 & \texttt{ch3194}        &  0.003739 & 6086.52 & 28 & \texttt{ch1080}   & -0.007669 & 13164.65 \\
29 & \texttt{ch3195}        &  0.003717 & 6086.72 & 29 & \texttt{ch1081}   & -0.007289 & 13164.85 \\
30 & \texttt{ch3149}        &  0.003540 & 6077.54 & 30 & \texttt{ch3239}   & -0.007045 & 6095.50 \\
31 & \texttt{ch3095}        &  0.003482 & 6066.77 & 31 & \texttt{ch2505}   &  0.006958 & 5949.09 \\
32 & \texttt{ch4015}        & -0.003438 & 4850.46 & 32 & \texttt{ch2272}   & -0.006876 & 5902.62 \\
33 & \texttt{T17}           & -0.003431 & --      & 33 & \texttt{ch2777}   &  0.006430 & 6003.35 \\
34 & \texttt{T12}           &  0.003291 & --      & 34 & \texttt{ch2619}   &  0.006349 & 5971.83 \\
35 & \texttt{ch2617}        & -0.003133 & 5971.43 & 35 & \texttt{ch2618}   &  0.006278 & 5971.63 \\
36 & \texttt{ch1253}        & -0.003082 & 13163.37 & 36 & \texttt{ch2390}   &  0.006264 & 5926.15 \\
37 & \texttt{ch4184}        & -0.003079 & 4884.17 & 37 & \texttt{ch4\_1}   &  0.006233 & --      \\
38 & \texttt{ch3824}        & -0.003073 & 4812.37 & 38 & \texttt{ch2771}   &  0.005806 & 6002.15 \\
39 & \texttt{ch4047}        &  0.003058 & 4856.85 & 39 & \texttt{ch2992}   & -0.005741 & 6046.23 \\
40 & \texttt{ch4\_7}        &  0.003057 & --      & 40 & \texttt{ch2770}   & -0.005549 & 6001.95 \\
\bottomrule
\end{tabular}
\label{tab:ch4_lasso_time_top40_land_ocean}
\end{table}

\begin{table}[htbp]
\centering
\caption{Land XCO$_2$ out-of-time NRMSE. NRMSE is reported for the 2020 in-period test set and the 2021--2023 out-of-time test sets. Entries are means $\pm$ standard deviations over 10 runs.}
\label{tab:OOT_land_xco2}
\footnotesize
\renewcommand{\arraystretch}{1.2}
\setlength{\tabcolsep}{6pt}
\begin{tabular}{lcccc}
\toprule
\textbf{Model} & \textbf{2020} & \textbf{2021} & \textbf{2022} & \textbf{2023} \\
\midrule
k-NN
& 0.00488 $\pm$ 0.00004
& 0.00634 $\pm$ 0.00001
& 0.00828 $\pm$ 0.00002
& 0.01123 $\pm$ 0.00004 \\
k-NN (time)
& 0.00456 $\pm$ 0.00002
& 0.00692 $\pm$ 0.00001
& 0.00895 $\pm$ 0.00003
& 0.01191 $\pm$ 0.00006 \\
XGBoost
& 0.00440 $\pm$ 0.00005
& 0.00544 $\pm$ 0.00002
& 0.00747 $\pm$ 0.00009
& 0.01090 $\pm$ 0.00016 \\
XGBoost (time)
& \textbf{0.00439} $\pm$ 0.00004
& 0.00544 $\pm$ 0.00003
& 0.00744 $\pm$ 0.00008
& 0.01088 $\pm$ 0.00012 \\
NN
& 0.00456 $\pm$ 0.00003
& \textbf{0.00497} $\pm$ 0.00009
& 0.00546 $\pm$ 0.00017
& 0.00596 $\pm$ 0.00043 \\
NN (time)
& 0.00458 $\pm$ 0.00003
& 0.01572 $\pm$ 0.00116
& 0.01731 $\pm$ 0.00188
& 0.01869 $\pm$ 0.00307 \\
Lasso
& 0.00495 $\pm$ 0.00004
& 0.00511 $\pm$ 0.00001
& \textbf{0.00525} $\pm$ 0.00001
& \textbf{0.00534} $\pm$ 0.00002 \\
Lasso (time)
& 0.00493 $\pm$ 0.00006
& 0.00513 $\pm$ 0.00001
& 0.00588 $\pm$ 0.00004
& 0.00666 $\pm$ 0.00008 \\
\bottomrule
\end{tabular}
\end{table}

\begin{table}[htbp]
\centering
\caption{Ocean XCO$_2$ out-of-time NRMSE. NRMSE is reported for the 2020 in-period test set and the 2021--2023 out-of-time test sets. Entries are means $\pm$ standard deviations over 10 runs.}
\label{tab:OOT_ocean_xco2}
\renewcommand{\arraystretch}{1.2}
\setlength{\tabcolsep}{6pt}

\begin{tabular}{lcccc}
\toprule
\textbf{Model} & \textbf{2020} & \textbf{2021} & \textbf{2022} & \textbf{2023} \\
\midrule
k-NN
& 0.00426 $\pm$ 0.00004
& 0.00540 $\pm$ 0.00001
& 0.00624 $\pm$ 0.00002
& 0.00859 $\pm$ 0.00006 \\
k-NN (time)
& 0.00451 $\pm$ 0.00004
& 0.00796 $\pm$ 0.00003
& 0.01084 $\pm$ 0.00009
& 0.01495 $\pm$ 0.00018 \\
XGBoost
& 0.00341 $\pm$ 0.00005
& 0.00464 $\pm$ 0.00002
& 0.00627 $\pm$ 0.00004
& 0.01032 $\pm$ 0.00007 \\
XGBoost (time)
& \textbf{0.00340} $\pm$ 0.00004
& 0.00466 $\pm$ 0.00004
& 0.00632 $\pm$ 0.00008
& 0.01037 $\pm$ 0.00017 \\
NN
& 0.01182 $\pm$ 0.00058
& 0.01809 $\pm$ 0.00141
& 0.02939 $\pm$ 0.00346
& 0.04814 $\pm$ 0.00691 \\
NN (time)
& 0.01076 $\pm$ 0.00209
& 0.02238 $\pm$ 0.00531
& 0.04255 $\pm$ 0.01128
& 0.07509 $\pm$ 0.02027 \\
Lasso
& 0.00384 $\pm$ 0.00004
& 0.00402 $\pm$ 0.00001
& \textbf{0.00395} $\pm$ 0.00001
& \textbf{0.00416} $\pm$ 0.00001 \\
Lasso (time)
& 0.00383 $\pm$ 0.00004
& \textbf{0.00401} $\pm$ 0.00001
& 0.00396 $\pm$ 0.00001
& 0.00420 $\pm$ 0.00003 \\
\bottomrule
\end{tabular}
\end{table}

\begin{table}[htbp]
\centering
\caption{Land XCH$_4$ out-of-time NRMSE. NRMSE is reported for the 2020 in-period test set and the 2021--2023 out-of-time test sets. Entries are means $\pm$ standard deviations over 10 runs.}
\label{tab:OOT_land_xch4}
\footnotesize
\renewcommand{\arraystretch}{1.2}
\setlength{\tabcolsep}{6pt}
\begin{tabular}{lcccc}
\toprule
\textbf{Model} & \textbf{2020} & \textbf{2021} & \textbf{2022} & \textbf{2023} \\
\midrule
k-NN
& 0.00728 $\pm$ 0.00009
& 0.01326 $\pm$ 0.00002
& 0.02065 $\pm$ 0.00003
& 0.02747 $\pm$ 0.00003 \\
k-NN (time)
& \textbf{0.00628} $\pm$ 0.00007
& 0.01144 $\pm$ 0.00002
& 0.01709 $\pm$ 0.00002
& 0.02294 $\pm$ 0.00004 \\
XGBoost
& 0.01005 $\pm$ 0.00010
& 0.01379 $\pm$ 0.00002
& 0.02041 $\pm$ 0.00006
& 0.02709 $\pm$ 0.00010 \\
XGBoost (time)
& 0.01001 $\pm$ 0.00009
& 0.01317 $\pm$ 0.00007
& 0.01958 $\pm$ 0.00009
& 0.02636 $\pm$ 0.00013 \\
NN
& 0.00629 $\pm$ 0.00035
& 0.01119 $\pm$ 0.00086
& 0.01866 $\pm$ 0.00152
& 0.02526 $\pm$ 0.00208 \\
NN (time)
& 0.00645 $\pm$ 0.00062
& 0.00807 $\pm$ 0.00068
& 0.01013 $\pm$ 0.00195
& 0.01204 $\pm$ 0.00295 \\
Lasso
& 0.00688 $\pm$ 0.00009
& 0.01184 $\pm$ 0.00001
& 0.01921 $\pm$ 0.00003
& 0.02576 $\pm$ 0.00004 \\
Lasso (time)
& 0.00673 $\pm$ 0.00007
& \textbf{0.00725} $\pm$ 0.00003
& \textbf{0.00742} $\pm$ 0.00008
& \textbf{0.00755} $\pm$ 0.00010 \\
\bottomrule
\end{tabular}
\end{table}

\begin{table}[htbp]
\centering
\caption{Ocean XCH$_4$ out-of-time NRMSE. NRMSE is reported for the 2020 in-period test set and the 2021--2023 out-of-time test sets. Entries are means $\pm$ standard deviations over 10 runs.}
\label{tab:OOT_ocean_xch4}
\footnotesize
\renewcommand{\arraystretch}{1.2}
\setlength{\tabcolsep}{6pt}
\begin{tabular}{lcccc}
\toprule
\textbf{Model} & \textbf{2020} & \textbf{2021} & \textbf{2022} & \textbf{2023} \\
\midrule
k-NN
& 0.00871 $\pm$ 0.00008
& 0.01071 $\pm$ 0.00001
& 0.01544 $\pm$ 0.00003
& 0.01968 $\pm$ 0.00002 \\
k-NN (time)
& 0.00812 $\pm$ 0.00007
& 0.02166 $\pm$ 0.00006
& 0.03018 $\pm$ 0.00010
& 0.03480 $\pm$ 0.00019 \\
XGBoost
& 0.00839 $\pm$ 0.00017
& 0.01250 $\pm$ 0.00006
& 0.02084 $\pm$ 0.00013
& 0.02778 $\pm$ 0.00041 \\
XGBoost (time)
& 0.00855 $\pm$ 0.00010
& 0.01256 $\pm$ 0.00009
& 0.02105 $\pm$ 0.00026
& 0.02817 $\pm$ 0.00062 \\
NN
& 0.00843 $\pm$ 0.00296
& 0.01315 $\pm$ 0.00211
& 0.02181 $\pm$ 0.00210
& 0.02919 $\pm$ 0.00180 \\
NN (time)
& 0.00724 $\pm$ 0.00303
& 0.00911 $\pm$ 0.00326
& 0.01416 $\pm$ 0.00473
& 0.01748 $\pm$ 0.00563 \\
Lasso
& 0.00521 $\pm$ 0.00004
& 0.00947 $\pm$ 0.00002
& 0.01701 $\pm$ 0.00006
& 0.02413 $\pm$ 0.00009 \\
Lasso (time)
& \textbf{0.00514} $\pm$ 0.00003
& \textbf{0.00656} $\pm$ 0.00007
& \textbf{0.01014} $\pm$ 0.00020
& \textbf{0.01324} $\pm$ 0.00032 \\
\bottomrule
\end{tabular}
\end{table}

\clearpage
\begin{backmatter}

\section*{Abbreviations}

GHG: greenhouse gas;
GOSAT: Greenhouse Gases Observing Satellite;
TANSO-FTS: Thermal and Near-infrared Sensor for Carbon Observation--Fourier Transform Spectrometer;
SWIR: short-wavelength infrared;
L2: Level 2;
NIES: National Institute for Environmental Studies;
CO$_2$: carbon dioxide;
CH$_4$: methane;
XCO$_2$: column-averaged dry-air mole fraction of carbon dioxide;
XCH$_4$: column-averaged dry-air mole fraction of methane;
TCCON: Total Carbon Column Observing Network;
OCO-2: Orbiting Carbon Observatory-2;
TROPOMI: Tropospheric Monitoring Instrument;
CrIS: Cross-track Infrared Sounder;
OSSE: Observing System Simulation Experiment;
NRMSE: normalized root mean squared error;
Lasso: least absolute shrinkage and selection operator;
NN: neural network;
$k$-NN: $k$-nearest neighbours;
XGBoost: Extreme Gradient Boosting.

\section*{Declarations}

\section*{Availability of data and materials}

The GOSAT data used in this study are based on the GOSAT TANSO-FTS
SWIR Level 1B product V230.231 and the GOSAT TANSO-FTS SWIR Level 2
product V03.00. These products are distributed through the GOSAT Data
Archive System operated by the National Institute for Environmental Studies,
Japan: \url{https://data2.gosat.nies.go.jp/index_en.html}. Access to these
data requires user registration and is subject to the GOSAT/GOSAT-2 data
policy and the terms specified by the data provider.

The TCCON data used for external validation are available from the TCCON
Data Archive: \url{https://tccondata.org/}. The individual TCCON datasets
used in this study are cited in the reference list through their corresponding
dataset references and persistent identifiers.

The code used for the numerical experiments is available from the corresponding
author on reasonable request. The processed datasets are derived from GOSAT
and TCCON products and can be shared only to the extent permitted by the
redistribution conditions of the original data providers.


\section*{Competing interests}

The authors declare that they have no competing interests.

\section*{Funding}

This work was supported by the National Institute for Environmental Studies
through a research collaboration with EURECOM. The National Institute for
Environmental Studies provided scientific and technical support through the
contributions of its affiliated co-authors, as described in the Authors'
contributions section. Apart from these author contributions, the funding body
had no institutional role in the study design, data processing, statistical
analysis, interpretation of results, decision to submit the manuscript, or
manuscript preparation.

\section*{Authors' contributions}

NG carried out the numerical experiments and prepared the figures and tables.
MK supervised the statistical analysis. YS and HY contributed expertise on
GOSAT retrievals, TCCON validation, data interpretation, and remote-sensing
aspects of the study. All authors contributed to the study design,
interpretation of the results, and revision of the manuscript. All authors read
and approved the final manuscript.

\section*{Use of generative AI tools}

During manuscript preparation, the authors used ChatGPT only as an auxiliary
tool for language editing, LaTeX and formatting checks, and creation of the
graphical abstract. The study design, data processing, implementation,
numerical experiments, statistical analysis, interpretation of results, and all
final decisions about the manuscript were carried out by the authors. The
authors reviewed and approved all AI-assisted text and graphical material. No
generative AI tool is listed as an author.



\section*{Acknowledgements}

The authors thank the TCCON team and the site principal investigators for
making the TCCON data used in this study available, and the TCCON
contributors for advice on data use and acknowledgement.




%

\bibliographystyle{peps-art} 
\bibliography{bibfile}  

\end{backmatter}
\end{document}